\newcommand{\magn}[1]{{\textbf{#1}}}
\newcommand{\refSection}[1]{section~\textbf{\ref{#1}}}
\newcommand{\RefSection}[1]{Section~\textbf{\ref{#1}}}
\newcommand{\refEq}[1]{eq~(\textbf{\ref{#1}})}
\newcommand{\refEqs}[2]{eqs~(\textbf{\ref{#1}}) and (\textbf{\ref{#2}})}
\newcommand{\RefEq}[1]{Eq~(\textbf{\ref{#1}})}
\newcommand{\RefEqs}[2]{Eqs~(\textbf{\ref{#1}}) and (\textbf{\ref{#2}})}
\newcommand{\refFigure}[1]{figure~\textbf{\ref{#1}}}
\newcommand{\RefFigure}[1]{Figure~\textbf{\ref{#1}}}
\newcommand{\refTheorem}[1]{theorem~\textbf{\ref{#1}}}
\newcommand{\refExample}[1]{example~\textbf{\ref{#1}}}
\newcommand{\refClaim}[1]{claim~\textbf{\ref{#1}}}
\newcommand{\RefProposition}[1]{Proposition~\textbf{\ref{#1}}}
\newcommand{\refProposition}[1]{proposition~\textbf{\ref{#1}}}
\newcommand{\RefTable}[1]{Table~\textbf{\ref{#1}}}
\newcommand{\RefAlgorithm}[1]{Algorithm~\textbf{\ref{#1}}}
\newcommand{\refDefinition}[1]{definition~\textbf{\ref{#1}}}
\newcommand{\bullitem}[0]{\ensuremath{\bullet~}}
\newcommand{\powerop}{\ensuremath{{\odot}}}
\newcommand{\ie}[0]{\emph{i.e.},~}
\newcommand{\eg}[0]{\emph{e.g.},~}
\newcommand{\aka}[0]{\emph{a.k.a}.~}
\renewcommand{\Re}[0]{\mathbb{R}} 
\newcommand{\Qe}[0]{\mathbb{Q}} 
\newcommand{\Ce}[0]{\mathbb{C}} 
\newcommand{\spsign}[0]{\ensuremath{\frown}} 
\newcommand{\mb}{\ensuremath{\Delta}}
\newcommand{\xx}[0]{\ensuremath{x}}
\newcommand{\zz}[0]{\ensuremath{z}}
\DeclareRobustCommand{\xs}{\ensuremath{\underline{\xx}}}
\newcommand{\zs}[0]{\ensuremath{\underline{\zz}}}
\newcommand{\comment}[1]{}
\newcommand{\opp}[1]{{\overset{#1}{\oplus}}}
\newcommand{\bigopp}[1]{\overset{#1}{\bigoplus}}
\newcommand{\falsemath}{\ensuremath{\text{\textsc{false}}}}
\newcommand{\truemath}{\ensuremath{\text{\textsc{true}}}}
\newcommand{\sn}{\ensuremath{\settype{S}_{N}}}
\newcommand{\func}[1]{\mathsf{#1}}
\newcommand{\jfun}{\ensuremath{\jmath}}
\newcommand{\ff}{\ensuremath{\func{f}}}
\newcommand{\fg}{\ensuremath{\func{g}}}
\newcommand{\pp}{\ensuremath{\func{p}}}
\newcommand{\ph}{\ensuremath{\widehat{\pp}}}
\newcommand{\PP}{\ensuremath{\func{P}}}
\newcommand{\PH}{\ensuremath{\widehat{P}}}
\newcommand{\qq}{\ensuremath{\func{q}}}
\newcommand{\strat}{\ensuremath{\func{s}}}
\newcommand{\QQ}{\ensuremath{\func{Q}}}
\newcommand{\settype}[1]{\ensuremath{\mathcal{#1}}}
\newcommand{\II}{\ensuremath{\mathrm{I}}}
\newcommand{\AAA}{\ensuremath{\mathrm{A}}}
\newcommand{\BBB}{\ensuremath{\mathrm{B}}}
\newcommand{\JJ}{\ensuremath{\mathrm{J}}}
\newcommand{\KK}{\ensuremath{\mathrm{K}}}
\newcommand{\NN}{\ensuremath{\settype{N}}}
\newcommand{\xor}{\ensuremath{{\mathrm{xor}}}}
\newcommand{\OO}{\mathbf{\mathcal{O}}}
\renewcommand{\SS}{\settype{S}}
\newcommand{\FF}{\settype{F}}
\newcommand{\YY}{\settype{Y}}
\newcommand{\XX}{\settype{X}}
\newcommand{\RR}{\settype{Y}^*}
\newcommand{\WW}{\settype{P}}
\newcommand{\sumset}[0]{{ \settype{S}}}
\newcommand{\pset}[0]{{ \settype{D}}}
\newcommand{\semig}{\mathscr{G}}
\newcommand{\semiring}{\mathscr{S}}
\newcommand{\Dn}{\ensuremath{\mathbf{A}}}
\newcommand{\nb}{\partial}
\newcommand{\compclass}{\mho}
\newcommand{\msg}[2]{{\ph}_{#1 \to #2}}
\newcommand{\msgs}{{\underline{\ph}}}
\newcommand{\msgss}[2]{{\underline{\ph}}_{#1 \to #2}}
\newcommand{\msgq}[2]{\func{q}_{#1 \to #2}}
\newcommand{\msgssq}[2]{\underline{\func{q}}_{#1 \to #2}}
\newcommand{\msgt}[2]{{\widetilde{\func{p}}}_{#1 \to #2}}
\newcommand{\pht}{\ensuremath{\widetilde{\pp}}}
\newcommand{\MSGT}[2]{{\widetilde{\func{P}}}_{#1 \to #2}}
\newcommand{\PHT}{\ensuremath{\widetilde{\PP}}}
\newcommand{\MSGBIT}[2]{{\widetilde{{\func{P}}}}_{#1 \leftrightarrow #2}}
\newcommand{\msgbit}[2]{{{\underline{\widetilde{\pp}} }}_{#1 \leftrightarrow #2}}
\newcommand{\MSG}[2]{{\func{P}}_{#1 \to #2}}
\newcommand{\PSP}[0]{\func{S}} 
\newcommand{\msp}[2]{\func{S}_{#1 \to #2}}
\newcommand{\back}[1]{\ensuremath{\setminus #1}}
\newcommand{\ident}{\mathbf{\func{1}}}
\newcommand{\identt}[1]{\overset{#1}{\ident}}
\newcommand{\NP}{\ensuremath{\mathbb{NP}}}
\newcommand{\coNP}{\ensuremath{\mathbf{co}\mathbb{NP}}}
\newcommand{\Aclass}{\ensuremath{\mathbb{A}}}
\newcommand{\PPclass}{\ensuremath{\mathbb{PP}}}
\newcommand{\Poly}{\ensuremath{\mathbb{P}}}
\newcommand{\RP}{\ensuremath{\mathbb{RP}}}
\newcommand{\poly}{\Poly}
\newcommand{\pspace}{\ensuremath{\mathbb{PSPACE}}}
\newcommand{\sharpP}{\ensuremath{\mathbb{\#P}}}
\newcommand{\defeq}{\ensuremath{\overset{\text{def}}{=}}}
\newcommand{\bptimes}{\ensuremath{\overset{\text{}}{\otimes}}}
\newcommand{\bpplus}{\ensuremath{\overset{\text{}}{\oplus}}}
\newcommand{\bigbpplus}{\ensuremath{\overset{\text{}}{\bigoplus}}}
\newcommand{\bigbptimes}{\ensuremath{\overset{\text{}}{\bigotimes}}}
\newcommand{\sptimes}{\ensuremath{\overset{\spsign}{\otimes}}}
\newcommand{\spplus}{\ensuremath{\overset{\spsign}{\oplus}}}
\newcommand{\bigspplus}{\ensuremath{\overset{\spsign}{\bigoplus}}}
\newcommand{\sumop}{\ensuremath{\mathrm{sum}}}
\newcommand{\prodop}{\ensuremath{\mathrm{prod}}}
\newtheorem{claim}[theorem]{Claim}
\begin{document}

\markboth{S. Ravanbakhsh and R. Greiner}{Revisiting Algebra \& Complexity of Inference in Graphical Models}

\title{Revisiting Algebra and Complexity of Inference in Graphical Models}
\author{Siamak Ravanbakhsh
\affil{University of Alberta}
Russell Greiner
\affil{University of Alberta}
}

\begin{abstract}
This paper studies the form and complexity of inference in graphical models using the abstraction offered by algebraic structures.
In particular, we broadly formalize inference problems in graphical models by viewing them as a sequence of operations based on commutative semigroups. We then study the computational complexity
of inference by organizing various problems into
an \textit{inference hierarchy}.
When the underlying structure of an inference problem is a commutative semiring -- \ie a combination of two commutative semigroups with the distributive law --   
a message passing procedure called belief propagation can leverage this distributive law to perform polynomial-time inference for certain problems. 
After establishing the \NP-hardness of inference 
in any commutative semiring, we investigate the relatioin between algebraic properties in this setting and further show that polynomial-time inference using
distributive law does not (trivially) extend to inference problems that are expressed using more than two commutative semigroups.
We then extend the algebraic treatment of message passing procedures to survey propagation, providing a novel perspective using a 
combination of two commutative semirings. This formulation generalizes the application of survey propagation to new settings.
\end{abstract}

\maketitle


\section{Introduction}
Many complicated systems can be modeled as a graphical structure with interacting local functions. Many fields have (almost independently) discovered this:
 grpahical models have been used in
bioinformatics
(protein folding, pedagogy
trees, regulatory networks), 
 neuroscience
(formation of associative memory and
neuroplasticity), 
 communication theory 
(low density parity check codes), 
 statistical physics 
(physics of dense matter and spin-glass theory), 
 image processing
(inpainting, stereo/texture reconstruction, denoising and super-resolution),
 compressed sensing, 
robotics (particle filters), 
 sensor networks, social networks, natural language processing, speech recognition,
combinatorial optimization and 
artificial intelligence
 (artificial neural networks, Bayesian networks).
Two general perspectives have emerged from these varied approaches to local computation, 
namely  variational~\cite{Wainwright2007} versus algebraic~\cite{Aji2000} perspectives on graphical models.
These two perspectives are to some extent ``residuals'' from the different origins of research in AI and statistical physics.

In the statistical study of physical systems, Boltzmann distribution   
 relates the 
probability of each state of a physical system to its energy, 
which is often decomposed due to local interactions \cite{Mezard1987,Mezard09}.
These studies often  
 model systems at the thermodynamic limit of infinite variables and the 
average behaviour through the study of random ensembles.  
Inference techniques with this origin (\eg mean-field and cavity methods) are asymptotically exact under these assumptions. 
Most importantly these studies have reduced inference to optimization through the notion of free energy --\aka the variational approach. 

In contrast, graphical models in the AI community have emerged from the study of knowledge representation
and reasoning under uncertainty \cite{pearl_probabilistic_1988}. These advances are characterized by  their attention to the 
theory of computation and logic \cite{bacchus1991representing}, where  interest in computational (as opposed to analytical) 
solutions has motivated the study of approximability, computational complexity \cite{Cooper90:Computational,roth1993hardness} and 
 the invention of inference techniques such as belief propagation that are efficient and exact on tree structures.  
But, most relevant to the topic of this paper, these studies have lead to algebraic abstractions 
in modeling systems that allow local computation \cite{shenoy1990axioms,lauritzen1997local}.

The common foundation underlying these two approaches is information theory, 
where the derivation of probabilistic principles from logical axioms \cite{jaynes2003probability} leads to
notions such as entropy and divergences that are closely linked to their physical counter-parts
\ie entropy and free energies in  physical systems. At a less abstract level, \cite{Yedidia2001a} showed that
inference techniques in AI and communication are attempting to minimize  (approximations to) free energy (also see \cite{Aji01,Heskes03}).

Another exchange of ideas between the two fields was in the study of critical phenomenon in random constraint satisfaction problems by both computer scientists and physicists \cite{fu_application_1986,mitchell1992hard,monasson1999determining}.
Satisfiability is at the heart of the theory of computation and is used to investigate reasoning in AI. 
On the other hand, the study of critical phenomena and phase transitions 
is central in the statistical physics of disordered systems. This culminated when
a variational analysis lead to the discovery of survey propagation~\cite{mezard_analytic_2002} for constraint satisfaction, which
significantly advanced the state-of-the-art in solving random satisfiability problems.

Despite this convergence, variational and algebraic perspectives are to some extent complementary -- \eg the variational approach does not extend beyond (log) probabilities, while the algebraic approach cannot justify application of message passing to graphs with loops.
This paper is concerned with the algebraic approach. We organize and generalize the previous work on the algebra of graphical models and give several new
results on the complexity and limit of inference in this framework. To this end, 
\refSection{sec:inference} broadly formalizes (and extends) the problem of inference using factor-graphs and commutative semigroups. 
\RefSection{sec:hierarchy} organizes a subset of inference problems into an inference hierarchy 
with increasing levels of computational complexity under \pspace. 
\RefSection{sec:gdl} reviews the distributive property that make efficient inference possible (using Belief Propagation), establishes the difficulty of inference in general commutative semirings and derives a negative result regarding the application of the distributive law beyond commutative semirings.  \RefSection{sec:sp} moves beyond Belief Propagation and introduces an algebraic interpretation of survey propagation that generalizes its application to new settings.

\section{The problem of inference}\label{sec:inference}
We use commutative semigroups to both define what a graphical 
model represents and also to define inference over this graphical model. 
The idea of using structures such as semigroups, monoids and semirings in expressing inference has a long history~\cite{lauritzen1997local,schiex1995valued,bistarelli1999semiring}. 
Our approach, based on factor-graphs~\cite{kschischang_factor_2001} 
and commutative semigroups, generalizes
a variety of previous frameworks, including
Markov networks~\cite{clifford1990markov}, 
Bayesian networks~\cite{pearl1985bayesian}, 
Forney graphs~\cite{forney2001codes}, hybrid models~\cite{dechter2001hybrid}, 
influence diagrams \cite{howard2005influence} and valuation
networks \cite{shenoy1992valuation}. 

In particular, the combination of factor-graphs and
semigroups that we consider here  
generalizes the plausibility, feasibility and utility framework of \cite{pralet2007algebraic},
which is explicitly reduced to  the graphical models mentioned above and many more.
The main difference in our approach is in keeping the framework free of semantics (\eg 
decision and chance variables, utilities, constraints),
that are often associated with variables, factors and operations, without changing the
expressive power.
These notions can later be associated with individual inference problems
 to help with interpretation.

\begin{definition}\label{def:semigroup}
A \magn{commutative semigroup} is a pair $\semig = (\RR, \otimes)$, where $\RR$ is a set and $\otimes: \RR \times \RR \to \RR$ is a binary operation that is 
(I)~associative: $ a \otimes (b \otimes c) = (a \otimes b) \otimes c$ and (II)~commutative: $a \otimes b = b \otimes a$ for all $a, b, c \in \RR$.
A \magn{commutative monoid} is a commutative semigroup plus an identity element $\identt{\otimes}$
such that  $a \otimes \identt{\otimes} = a$. 
If every element $a \in \RR$ has an inverse $a^{-1}$ (often written $\frac{1}{a}$), 
 such that $a \otimes a^{-1} = \identt{\otimes}$,
the commutative monoid is an \magn{Abelian group}.
\end{definition}
Here, the associativity and commutativity properties of a commutative semigroup make the operations invariant to the order of elements.  In general, these properties are not 
``vital'' and one may define inference starting from a \textit{magma}.\footnote{A  magma \cite{pinter2012book} generalizes a semigroup, as it does not require associativity property nor an identity element. Inference in graphical models can be also extended to use magma (in \refDefinition{def:fg}). For this, 
the elements of $\RR$ and/or $\XX$ should be ordered and/or parenthesized so as to avoid ambiguity in the order of pairwise operations over the set. Here, to avoid unnecessary complications, we confine our treatment to commutative semigroups.}

\begin{example} Some examples of semigroups are:
\begin{easylist}
& The set of strings with the concatenation operation forms a semigroup with the 
empty string as the identity element. However this semigroup is not commutative.
& The set of natural numbers $\settype{N}$ with summation defines a commutative semigroup. 
& Integers modulo $n$ with addition defines an Abelian group.
& The power-set $2^{\SS}$ of any set $\SS$, with intersection operation defines a commutative semigroup with $\SS$ as its identity element.
& The set of natural numbers with greatest common divisor defines a commutative monoid with $0$ as its identity. In fact any semilattice is a commutative semigroup~\cite{davey2002introduction}.
& Given two commutative semigroups on two sets $\RR$ and $\settype{Z}^*$, their Cartesian product is also a commutative semigroup.
\end{easylist}
\end{example}

Let $\xs = (\xx_1,\ldots,\xx_N)$ be a tuple of $N$ discrete
variables $\xx_i\in \XX_i$, where $\XX_i$ is the finite domain of
$\xx_i$  and $\xs \in \XX =  \XX_1 \times \ldots \times \XX_N$.  
Let $\II \subseteq \NN = \{1,2,\ldots,N\}$ denote a subset
of variable indices and $\xs_\II \!=\! \{ \xx_i\! \mid\! i\in \II\} \in \XX_\II$ be the
tuple of variables in $\xs$ indexed by the subset $\II$.
A factor $\ff_{\II}: \XX_{\II} \to \YY_\II$ is a function over a subset of variables and 
$\YY_\II = \{\, \ff_\II(\xs_\II) \, \mid\, \xs_\II \in \XX_\II\, \}$ is the range of this factor.

\begin{definition}\label{def:fg}
 A \magn{factor-graph}  is a pair $(\FF, \semig)$  such that
\begin{itemize}
\item $\FF = \{ \ff_{\II}\}$ is a collection of factors with collective range  $\YY = \bigcup_\II \YY_\II$.
\item  $|\FF| = \mathrm{Poly}(N)$.  
\item $\ff_\II$ has a polynomial representation in $N$ and it is possible to evaluate $\ff_\II(\xs_\II)\; \forall \II, \xs_\II$ in polynomial time.
\item $\semig = (\RR, \otimes)$ is a commutative semigroup, where  $\RR$ is the closure of $\YY$ w.r.t. $\otimes$.
\end{itemize}
The factor-graph compactly represents the expanded (joint) form
\begin{align}\label{eq:expanded}
  \qq(\xs) = \bigotimes_\II \ff_\II(\xs_\II)
\end{align}
\end{definition}


Note that the connection between the set of factors $\FF$ and the commutative semigroup is through the ``range''  of factors.
The conditions of this definition are necessary and sufficient to 1) compactly represent a factor-graph and 2) evaluate the expanded form, $\qq(\xs)$, in polynomial time.
In the following we make a stronger assumption that ensures a factor has a tractable tabular form -- that is $|\XX_\II| = \mathrm{Poly}(N)$, which means  $\ff_\II(\xs_\II)$ 
can be explicitly expressed for each $\xs_\II \in \XX_\II$ as 
an element of $|\II|$-dimensional array or table.\footnote{Important factor-graphs that violate this assumption are the ones with high-order sparse factors \cite{Tarlow2010,Potetz2008}. 
Although it is possible to obtain polynomial time message passing updates for special high-order factors, general high-order factors do not have
polynomial representation.}
 



$\FF$ can be conveniently represented as a bipartite graph 
that includes two sets of nodes: variable nodes $\xx_i$, and factor 
 nodes ${\II}$. A variable node $i$ (note that we will often
identify a variable $\xx_i$ with its index ``$i$'') is connected to a
factor node $\II$ if and only if $i \in \II$ --\ie $\II$ is a set that is also an index. 
We will use $\nb$ to denote the neighbours of a variable or factor node in the factor
graph -- that is $\nb \II = \{\, i \; \mid \; i \in \II\, \}$ (which is
the set $\II$) and $\nb i = \{\, \II \; \mid \; i \in \II\, \}$.

Also, we use $\mb i$ to denote the \magn{Markov blanket} of node $\xx_i$ -- \ie $\mb i 
= \{ j \in \nb \II\; \mid \; \II \in \nb i,\; j \neq i\}$.
\begin{example}\label{example:fig1}
\refFigure{fig:fg} shows a factor-graph with 12 variables and 12 factors.
Here \\ $\xs = (\xx_i, \xx_j,\xx_k, \xx_e, \xx_m, \xx_o, \xx_r, \xx_s, \xx_t, \xx_u, \xx_v, \xx_w)$, $\II = \nb \II =  \{i,j,k\}$,  $\xs_\mathrm{K} = \xs_{\{k,w,v\}}$
and $\nb j = \{\II, \mathrm{V}, \mathrm{W}\}$. 
Assuming $\semig_e = (\Re, \min)$, the expanded
form represents  $$\qq(\xs) = \min \{\, \ff_{\II}(\xs_\II), \ff_{\JJ}(\xs_\JJ),\ldots,\ff_{\mathrm{Z}}(\xs_{\mathrm{Z}})\, \}.$$

Now, assume that all variables are binary -- \ie $\XX = \{0,1\}^{12}$ and $\qq(\xs)$ is 12-dimensional hypercube, with one value per each assignment at each corner. Also assume
each of the factors count the number of non-zero variables -- \eg for $\zs_{\mathrm{W}} = (1,0,1) \in \XX_{\mathrm{W}}$ we have $\ff_{\mathrm{W}}(\zs_{\mathrm{W}}) = 2$. Then, for the complete assignment 
$\zs = (0,1,0,1,0,1,0,1,0,1,0,1) \in \XX$, it is easy to check that the expanded form is 
$\qq(\zs) = \min \{2, 0, 1,\ldots, 1\} = 0$. 
\end{example}

A marginalization operation shrinks the expanded form $\qq(\xs)$ using another commutative semigroup with binary operation $\oplus$.
Inference is a combination of an expansion and one or more marginalization operations,
which can be computationally intractable due to the exponential size of the expanded form.
\begin{definition} 
Given a function $\qq: \XX_{\JJ} \to \YY$, and a commutative semigroup $\semig = (\RR, \oplus)$, where $\RR$ is the closure of $\YY$ w.r.t. $\oplus$,
 the marginal of $\qq$ for $\II \subset \JJ$ is 
\begin{align}\label{eq:marginalization}
 \qq(\xs_{\JJ \back \II}) \quad \defeq \quad \bigoplus_{\xs_{\II}} \qq(\xs_{\JJ}) 
\end{align}
where $\bigoplus_{\xs_{\II}} \qq(\xs_{\JJ})$ is short for $\bigoplus_{\xs_{\II} \in \XX_{\II}} \qq(\xs_{\JJ \back \II}, \xs_{\II})$, 
which means to compute $\qq(\xs_{\JJ \back \II})$ for each $\xs_{\JJ \back \II}$, one should perform the operation $\oplus$
over the set of all the assignments to the tuple $\xs_{\II} \in \XX_{\II}$.
\end{definition}
We can think of $\qq(\xs_\JJ)$ as a $\vert \JJ \vert$-dimensional tensor and marginalization as performing $\oplus$ operation over the axes
in the set $\II$. The result is another $\vert \JJ \back \II \vert$-dimensional tensor (or function) that we call the \magn{marginal}. 
Here if the marginalization is over all the dimensions in $\JJ$, we denote the marginal by $\qq(\emptyset)$ instead of $\qq(\xs_\emptyset)$ and call it the \magn{integral} of $\qq$.

\begin{figure}
\centering
\includegraphics[width=.5\textwidth]{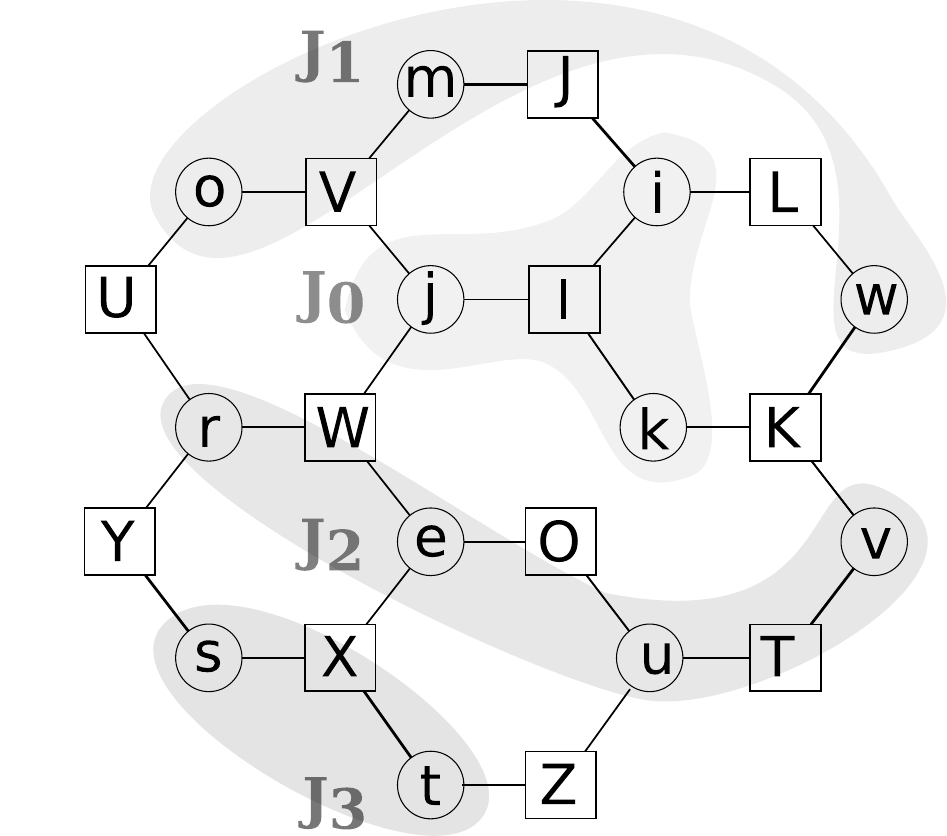}
\caption{A factor-graph with variables as circles and factors as squares. The partitioning of variables indicated by the shading in parts defines an inference problem over this factor-graph.}
\label{fig:fg}
\end{figure}

Now we define an inference problem as a sequence of marginalizations over the expanded form of a factor-graph.
\begin{definition}\label{def:inference}
An \magn{inference problem} seeks
  \begin{align}
    \qq(\xs_{\JJ_0}) = \bigopp{M}_{\xs_{\JJ_M}} \bigopp{M-1}_{\xs_{\JJ_{M-1}}} \ldots \bigopp{1}_{\xs_{\JJ_1}} \bigotimes_{\II} \ff_\II(\xs_\II) 
  \end{align}
where
  \begin{itemize}
  \item $\RR$ is the closure of $\YY$ (the collective range of factors), w.r.t. $\opp{1},\ldots,\opp{M}$ and $\otimes$. 
  \item $\semig_m = (\RR, \opp{m})\quad \forall 1 \leq m \leq M$ and $\semig_e=(\RR, \otimes)$ are all  commutative semigroups.
\item $\JJ_0,\ldots,\JJ_L$ partition the set of variable indices $\NN = \{1,\ldots,N\}$. 
\item $\qq(\xs_{\JJ_0})$ has a polynomial representation in $N$ -- \ie $|\XX_{\JJ_0}| = \mathrm{Poly}(N)$ 
\end{itemize}
\end{definition}

Note that $\opp{1},\ldots,\opp{M}$  refer to potentially different operations as each belongs to a different semigroup.
When $\JJ_0 = \emptyset$, we call the inference problem \magn{integration} (denoting the inquiry by $\qq(\emptyset)$) and otherwise we call it \magn{marginalization}.
Here, having a constant sized $\JJ_0$ is not always enough to ensure that $\qq(\xs_{\JJ_0})$  has a polynomial representation in $N$. This is because 
the size of $\qq(\xs_{\JJ_0})$ for any individual $\xs_{\JJ_0} \in \XX_{\JJ_0}$ may grow exponentially with $N$ (\eg see \refClaim{th:product_pspace}).
In the following we call $\semig_e = (\RR, \otimes)$ the expansion semigroup and $\semig_m = (\RR, \opp{m})$ for each $m$ in $\{1,\ldots,M\}$
is a marginalization semigroup.


\begin{example}\label{example:fig2}
Going back to \refExample{example:fig2}, the shaded region in \refFigure{fig:fg}
shows a partitioning of the variables that we use to define the following inference problem:
$$
\qq(\xs_{\JJ_0}) = \max_{\xs_{\JJ_3}} \sum_{\xs_{\JJ_2}} \min_{\xs_{\JJ_1}} \min_\II \ff_\II(\xs_\II) 
$$
We can associate this problem with the following semantics: we may 
think of each factor as an agent, where
$\ff_{\II}(\xs_\II)$ is the payoff for agent $\II$, which only depends
on a subset of variables $\xs_\II$. 
We have adversarial variables ($\xs_{\JJ_1}$), environmental or chance variables ($\xs_{\JJ_2}$), controlled variables ($\xs_{\JJ_3}$) and query variables ($\xs_{\JJ_0}$).  
For the inference problem above, 
each query $\xs_{\JJ_0}$ seeks to maximize 
the expected minimum payoff of all agents,
without observing the adversarial or chance variables, and 
assuming the adversary makes its decision after observing control and chance variables.
\end{example}

\begin{example}\label{example:ising}
A ``probabilistic'' graphical model is defined using a expansion semigroup $\semig_e = (\Re^{\geq 0}, \times)$ and often a marginalization semigroup $\semig_m = (\Re^{\geq 0}, +)$. The expanded form represents the unnormalized joint probability
$\qq(\xs) = \prod_{\II}\ff_{\II}(\xs_{\II})$, whose marginal probabilities are simply called marginals.
Replacing the summation with the marginalization semigroup $\semig_m = (\Re^{\geq 0}, \max)$, seeks the
 maximum probability state and the resulting integration problem 
$\qq(\emptyset) =  \max_{\xs} \prod_{\II} \ff_\II(\xs_\II)$
is known as \magn{maximum a posteriori (MAP)} inference. 
\index{MAP}
\index{maximum a posteriori|see{MAP}}
\index{max-product}
 Alternatively by adding a second marginalization operation to the summation, we get the \magn{marginal MAP} inference
\index{marginal MAP}
\index{max-sum-product}
\begin{align}\label{eq:marginalmap}
    \qq(\xs_{\JJ_0}) \quad = \quad \max_{\xs_{\JJ_2}} \sum_{\xs_{\JJ_1}} \prod_{\II} \ff_{\II}(\xs_\II).
\end{align}
where here $\bigotimes = \prod$, $\bigopp{1} = \sum$ and $\bigopp{2} = \max$ (recall the numbering of operations is left to right.)

If the object of interest is the negative 
\index{energy}
log-probability (\aka energy), the product expansion semigroup is
replaced by $\semig_e=(\Re, +)$. Instead of the sum marginalization semigroup, we can
\index{log-sum-exp semigroup}
\index{semigroup!log-sum-exp}
use the \magn{log-sum-exp} semigroup, $\semig_m = (\Re, +)$ where 
$a \oplus b \defeq \log(e^{-a} + e^{-b})$. The integral in this case is the log-partition function.
If we change the marginalization semigroup to $\semig_m = (\Re, \min)$, the integral is the minimum energy (corresponding to MAP).
\end{example}

\section{The inference hierarchy}\label{sec:hierarchy}
Often, the complexity class is concerned with the \magn{decision version} of the 
inference problem in \refDefinition{def:inference}. The decision version of an inference problem asks a yes/no question about the integral such as $\qq(\emptyset) \overset{?}{\geq} q$ \footnote{Assuming an ordering on $\RR$.} or 
$\qq(\emptyset) \overset{?}{=} q$
for a given $q \in \RR$. 



Here, we produce a hierarchy of inference problems  
in analogy to the polynomial~\cite{stockmeyer1976polynomial}, the counting~\cite{wagner1986complexity} and  the arithmetic~\cite{rogers1987theory} hierarchies. 

To define the hierarchy, we 
assume the following in \refDefinition{def:inference}:
\begin{easylist}
& Any two consecutive marginalization operations are distinct ($\opp{l} \neq \opp{{l+1}} \; \forall 1 \leq l < M$).
& The marginalization index sets $\JJ_l \; \forall 1 \leq l \leq M$ are non-empty. Moreover if 
$| \JJ_l| = \OO(\log(N))$ we call this marginalization operation a \magn{polynomial marginalization} as here $| \XX_{\JJ_{l}} | = \mathrm{Poly}(N)$. 
& In defining the factor-graph, we required each factor to be polynomially computable.
In building the hierarchy, we require the operations over each semigroup to be polynomially computable as well. To this end we consider 
the set of rational numbers $\RR \subseteq \Qe^{\geq 0} \cup \{\pm \infty\}$. Note that this automatically eliminates semigroups that involve operations such as exponentiation and logarithm (because $\Qe$ is not closed under these operations) and only consider summation, multiplication, minimization
and maximization. 
\end{easylist}
We can always re-express any inference problem to enforce the first two conditions and therefore they do not impose any restriction. Below, we will use a \magn{language} to identify inference problems for an arbitrary set of factors $\FF = \{\ff_\II\}$. For example, sum-product refers to the inference problem $\sum_{\xs} \prod_\II \ff_\II(\xs_\II) \overset{?}{\geq} q$.
In this sense the rightmost ``token'' in the language (here \textit{product}) identifies the expansion semigroup $\semig_e = (\mathbb{Q}, \times)$ and the rest of tokens identify the marginalization semigroups over $\mathbb{Q}$ in the given order. Therefore, this minimal language exactly identifies the inference problem. The only information that affects the computational complexity of an inference
problem but is not specified in this language is whether each of the marginalization operations are polynomial or exponential.

We define five \magn{inference families}: $\Sigma, \Pi, \Phi, \Psi, \Delta$. The families
are associated with that ``outermost'' marginalization operation  -- \ie $\opp{M}$ in \refDefinition{def:inference}. 
$\Sigma$ is the family of
inference problems where $\opp{M} = \sumop$. Similarly,
$\Pi$ is associated with product, $\Phi$ with minimization and $\Psi$ with maximization.
$\Delta$ is the family of inference problems where the last marginalization is polynomial (\ie $|\JJ_M| = \OO(\log(N))$ regardless of $\opp{M}$).

Now we define \magn{inference classes} in each family, such that all the problems in the same class
have the same computational complexity. Here, the hierarchy is exhaustive -- \ie
 it includes all inference problems using any combination of the four operations sum, min, max and product
 whenever the integral $\qq(\emptyset)$ has a polynomial representation (see \refClaim{th:product_pspace}). 
 Moreover the inference classes are disjoint.
 For this, each family
is parameterized by a subscript $M$ and two sets $\sumset$ and $\pset$ (\eg $\Phi_M(\sumset, \pset)$ is an inference ``class'' in family $\Phi$). As before, $M$
is the number of marginalization operations, $\sumset$ is the set of indices
of the (exponential) $\sumop$-marginalization and $\pset$ is the set of indices of
polynomial marginalizations.
\begin{example}
Sum-min-sum-product identifies the decision problem
\begin{align*}
\sum_{\xs_{\JJ_3}} \min_{\xs_{\JJ_2}} \sum_{\xs_{\JJ_1}} \prod_{\II} \ff_\II(\xs_\II) \quad \overset{?}{\geq} \quad q
\end{align*}
where $\JJ_1$, $\JJ_2$ and $\JJ_3$ partition $\settype{N}$. Assume
$\JJ_1 = \{2,\ldots,\frac{N}{2}\}$, $\JJ_2 = \{\frac{N}{2} + 1,\ldots, N\}$ and $\JJ_3 = \{1\}$. Since we have three marginalization operations $M = 3$.
Here the first and second marginalizations are exponential and the third one is polynomial (since $|\JJ_3|$ is constant). Therefore $\pset = \{3\}$.
Since the only exponential summation is $\bigopp{1}_{\xs_{\JJ_1}} = \sum_{\xs_{\JJ_1}}$,
$\sumset = \{1\}$.
In our inference hierarchy, this problem belongs to the class $\Delta_{3}( \{1\}, \{3\})$. 

Alternatively, if we use different values for $\JJ_1$, $\JJ_2$ and $\JJ_3$ whose sizes each grow linearly with $N$,
the corresponding inference problem becomes a member of  $\Sigma_3(\{1,3\}, \emptyset)$.
\end{example}
\begin{remark}
Note that arbitrary assignments to $M$, $\sumset$ and $\pset$ do not necessarily define a valid inference class.
For example we require that $\sumset \cap \pset = \emptyset$ and no index in $\pset$ or $\sumset$
can be larger than $M$. Moreover, the values in $\sumset$ and $\pset$ should be compatible with the inference class. For example, for inference class $\Sigma_M(\sumset,\pset)$, $M$ is a member of $\sumset$.
For notational convenience, if an inference class notation is invalid we equate it with an empty set -- \eg $\Psi_1(\{1\}, \emptyset) = \emptyset$, because $\sumset = \{1\}$ and $M=1$ means the inference class is $\Sigma$ rather than $\Psi$. 
\end{remark}

In the definition below, we ignore the inference problems in which product
appears in any of the marginalization semigroups (\eg product-sum).
The following claim, explains this choice. 
\begin{claim}\label{th:product_pspace}
For $\oplus_M = \prodop$, the inference query $\qq(\xs_{\JJ_0})$ can have an exponential representation in $N$. 
\end{claim}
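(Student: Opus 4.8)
The plan is to prove the claim by exhibiting an explicit factor-graph together with a single product-marginalization whose output is doubly exponential in $N$, and hence has no polynomial representation.

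First I would construct the factor-graph. Take $N$ binary variables $\xs = (\xx_1,\ldots,\xx_N) \in \{0,1\}^N$ and a single unary factor $\ff_{\{1\}}$ with the constant value $\ff_{\{1\}}(\xx_1) = 2$, together with the expansion semigroup $\semig_e = (\Qe^{\geq 0}, \times)$. This meets every requirement of \refDefinition{def:fg}: there is one factor; it has a trivial tabular representation over a domain of size $2 = \mathrm{Poly}(N)$ and is evaluable in constant time; and the closure $\RR$ of its range under $\times$ is a subset of $\Qe^{\geq 0}$. The expanded form is $\qq(\xs) = \ff_{\{1\}}(\xx_1) = 2$ for every $\xs \in \{0,1\}^N$.

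Next I would set up the inference problem with a single marginalization, $M = 1$, $\opp{1} = \prodop$ (so $\semig_1 = (\Qe^{\geq 0}, \times)$), $\JJ_1 = \NN = \{1,\ldots,N\}$, and $\JJ_0 = \emptyset$. By \refDefinition{def:inference} this asks for the integral
\[
\qq(\emptyset) \;=\; \bigopp{1}_{\xs_{\JJ_1}} \bigotimes_{\II} \ff_\II(\xs_\II) \;=\; \prod_{\xs \in \{0,1\}^N} \qq(\xs) \;=\; \prod_{\xs \in \{0,1\}^N} 2 \;=\; 2^{2^N}.
\]
Although $\JJ_0 = \emptyset$, so the query is a single number, that number is $2^{2^N}$, whose representation in any fixed base requires $\Theta(2^N)$ digits. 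Hence $\qq(\xs_{\JJ_0})$ has no polynomial representation in $N$, which is exactly the claim; this is also what motivates excluding $\prodop$ as a marginalization operation (even as $\opp{M}$) from the inference hierarchy.

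There is essentially no hard step: the only thing to check carefully is that the example satisfies every bullet of \refDefinition{def:fg} and \refDefinition{def:inference} — in particular that a constant map is a legitimate factor and that $\JJ_0 = \emptyset$ is permitted (it is, giving an integration problem). If one prefers to avoid a degenerate constant factor, the same argument runs verbatim with unary factors $\ff_{\{i\}}(\xx_i) = \xx_i + 1 \in \{1,2\}$, for which $\qq(\emptyset) = \prod_{\xs}\prod_{i}(\xx_i+1) = 2^{\,N\,2^{N-1}}$, still doubly exponential in $N$.
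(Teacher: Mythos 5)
Your proposal is correct and rests on the same key observation as the paper's proof: a product over the exponentially many assignments in $\XX_{\JJ_1}$ of values of constant bit-length yields an integral whose binary representation has length proportional to $\vert\XX_{\JJ_1}\vert$, hence exponential in $N$. The only difference is that you instantiate this with an explicit factor-graph witnessing $\qq(\emptyset)=2^{2^N}$, whereas the paper argues generically about the bit-length $\lceil\log_2 \qq(\emptyset)\rceil$; since the claim is existential ("can have"), your concrete example is a perfectly adequate, if anything slightly more self-contained, form of the same argument.
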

\begin{proof}
The claim states that when the product appears in the marginalization operations, the marginal (and integral) can become very large, such that we can no longer represent them in space polynomial in $N$.
 We show this for an integration problem. The same idea can show the exponential representation of a marginal query. 

To see why this integral has an exponential representation in $N$, consider its simplified form
\begin{align*}
  \qq(\emptyset) = \prod_{\xs_{\II}} \qq(\xs_{\II})
\end{align*}
where $\qq(\xs)$ here is the result of inference up to the last marginalization step $\opp{M}$, which is product, where $\XX_\II$ grows exponentially with $N$. 
Recall that the hierarchy is defined for operations on $\mathbb{Q}^{\geq 0}$. Since $\qq(\xs_\II)$ for each $\xs_\II \in \XX_\II$ 
has a constant size, say  $c$, the size of representation of $\qq(\emptyset)$ using a binary scheme is 
\begin{align*}
  \left \lceil \log_2(\qq(\emptyset)) \right \rceil = \left \lceil \log_{2} \big ( \prod_{\xs_{\II}} \qq(\xs_{\II}) \big ) \right \rceil = \left \lceil \sum_{\xs_{\II}} c \right \rceil = \left \lceil c \vert \XX_\II \vert \right \rceil
\end{align*}
which is exponential in $N$.
\end{proof}

Define the \magn{base members} of the families as
\begin{align} \label{eq:basemembers}
\Sigma_0(\emptyset, \emptyset) \defeq \{\sumop\} \quad &
\Phi_0(\emptyset, \emptyset) \defeq \{ \min \} \\
\Psi_0(\emptyset, \emptyset) \defeq \{ \max \} \quad &
\Pi_0(\emptyset, \emptyset) \defeq \{ \prodop \} \notag \\
\Delta_0(\emptyset, \emptyset) = \emptyset \quad &\Delta_{1}(\emptyset, \{1\}) \defeq \{\sumop-\sumop,\, \min-\min,\, \max-\max \} \notag
\end{align}
where the initial members of each family only identify the expansion semigroup -- \eg $\sumop$ in $\Sigma_0(\emptyset, \emptyset)$ identifies $\qq(\xs) = \sum_{\II} \ff_{\II}(\xs_\II)$.  
Here, the exception is $\Delta_1(\emptyset, \{1\})$, which contains three \textit{inference problems}.\footnote{We treat $M=1$ for $\Delta$ specially as in this case the marginalization operation can not be polynomial. This is because if $|\JJ_1| = \OO(\log(N))$, then $|\JJ_0| = \Omega(N)$ which violates the conditions in the definition of the inference problem. 
}

Let $\Xi_M(\sumset, \pset)$ denote the union of corresponding classes within all families:
\begin{align*}
  \Xi_M(\sumset, \pset) = \Sigma_M(\sumset, \pset) \cup \Pi_M(\sumset, \pset) \cup \Phi_M(\sumset, \pset) \cup \Psi_M(\sumset, \pset) \cup \Delta_M(\sumset, \pset)
  \end{align*}
Now define the \magn{inference family members} recursively, by adding a marginalization operation to all the problems in each inference class. If this marginalization is polynomial then the new class belongs to the $\Delta$ family and the set $\pset$ is updated accordingly. Alternatively, if this outermost marginalization is exponential, depending on the new marginal operation (\ie $\min, \max, \sumop$) the new class is defined to be a member of $\Phi, \Psi$ or $\Sigma$. For the case that the last marginalization is summation set $\sumset$ is updated.

\noindent {\bullitem\ \textbf{Adding an exponential marginalization}} $ \forall \; |\XX_{\JJ_M}| = \mathrm{Poly}(N),\, M > 0$
\begin{align}
\Sigma_{M+1}(\sumset \cup \{M+1\}, \pset) &\defeq \big \{ \sumop - \xi \mid \xi \in  \Xi_M(\sumset, \pset) \back \Sigma_{M}(\sumset, \pset)  \} \label{eq:hierarchymembers1}\\
\Phi_{M+1}(\sumset, \pset) &\defeq \big \{ \min - \xi \mid \xi \in \Xi_M(\sumset, \pset) \back \Phi_{M}(\sumset, \pset) \big \} \notag \\
\Psi_{M+1}(\sumset, \pset) &\defeq \big \{ \max - \xi \mid \xi \in \Xi_M(\sumset, \pset) \back \Psi_{M}(\sumset, \pset) \big \} \notag \\
\Pi_{M+1}(\sumset, \pset) &\defeq  \emptyset \notag 
\end{align}

\noindent \bullitem\ \textbf{Adding a polynomial marginalization} $\quad \forall \; |\XX_{\JJ_M}| =  \mathrm{Poly}(N),\, > 1$  
  \begin{align}\label{eq:hierarchymembers2}
\Delta_{M+1}(\sumset, \pset \cup \{M+1\}) &\defeq \big \{ \oplus-\xi \mid 
 \xi \in  \Xi_M(\sumset, \pset) \; , \oplus \in \{\min,\max,\sumop \} \big \}
\end{align}


\subsection{Single marginalization}
Following the recursive construction of the hierarchy as defined above,
the inference classes in the hierarchy with \textit{one} marginalization are 
\begin{align}
\Delta_1(\emptyset, \{1\}) &= \{\min-\min,\; \max-\max,\; \sumop-\sumop\}\\
\Psi_1(\emptyset, \emptyset)  &= \{\max-\min,\; \max-\sumop,\; \max-\prodop\} \\
\Phi_1(\emptyset, \emptyset)  &= \{\min-\max,\; \min-\sumop,\; \min-\prodop\} \\
\Sigma_1(\{1\}, \emptyset)  &=  \{ \sumop-\prodop,\; \sumop-\min,\; \sumop-\max \}
\end{align}

Now we review all the problems above and prove that $\Delta_1, \Psi_1, \Phi_1$ and $\Sigma_1$ are complete w.r.t. \poly, \NP, \coNP\ and \PPclass\ respectively.  
Starting from $\Delta_1$:
\begin{proposition}\label{th:sumsum_poly}
sum-sum, min-min and max-max inference are in \poly.
\end{proposition}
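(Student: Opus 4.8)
The plan is to exploit that in each of sum-sum, min-min and max-max the lone marginalization operation coincides with the expansion operation, so that by associativity and commutativity (\refDefinition{def:semigroup}) the whole computation is a single iterated application of one operation over both the factors \emph{and} the assignments $\xs_{\JJ_1}$, which may then be regrouped freely. Writing $\powerop$ for the common operation ($+$, $\min$ or $\max$),
\begin{align*}
\qq(\xs_{\JJ_0}) \;=\; \bigodot_{\xs_{\JJ_1}} \bigodot_{\II} \ff_\II(\xs_\II) \;=\; \bigodot_{\II}\Big(\bigodot_{\xs_{\JJ_1}} \ff_\II(\xs_\II)\Big).
\end{align*}
Hence it suffices to compute, for a fixed $\xs_{\JJ_0}$ and a fixed factor $\II$, the inner term $\bigodot_{\xs_{\JJ_1}} \ff_\II(\xs_\II)$ in polynomial time: the outer $\powerop$ then ranges over $|\FF| = \mathrm{Poly}(N)$ terms, and this is repeated for each of the $\mathrm{Poly}(N)$ values of $\xs_{\JJ_0}$ (just one when $\JJ_0 = \emptyset$).

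For the inner term I would split $\II = (\II\cap\JJ_0) \cup (\II\cap\JJ_1)$. As $\xs_{\JJ_1}$ ranges over $\XX_{\JJ_1}$, the value $\ff_\II(\xs_\II)$ depends only on the restriction $\xs_{\II\cap\JJ_1}$, and each such restriction is attained exactly $c_\II \defeq \prod_{i\in\JJ_1\setminus\II} |\XX_i|$ times. For $\powerop \in \{\min,\max\}$ this multiplicity is irrelevant by idempotency, so $\bigodot_{\xs_{\JJ_1}} \ff_\II(\xs_\II) = \bigodot_{\xs_{\II\cap\JJ_1}} \ff_\II(\xs_{\II\cap\JJ_0},\xs_{\II\cap\JJ_1})$, a $\powerop$ over at most $|\XX_\II| = \mathrm{Poly}(N)$ tabulated values. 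For $\powerop = +$ the same reordering gives
\begin{align*}
\sum_{\xs_{\JJ_1}} \ff_\II(\xs_\II) \;=\; c_\II \cdot \sum_{\xs_{\II\cap\JJ_1}} \ff_\II(\xs_{\II\cap\JJ_0},\xs_{\II\cap\JJ_1}),
\end{align*}
again a $\mathrm{Poly}(N)$-term sum of rationals, scaled by the integer $c_\II$.

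What remains is to check that every intermediate number has polynomial bit-length. Each $|\XX_i|$ has constant size, so $c_\II$, a product of at most $N$ of them, has bit-length $O(N)$ and is formed and multiplied in polynomial time; summing $\mathrm{Poly}(N)$ values from $\RR \subseteq \Qe^{\geq 0} \cup \{\pm\infty\}$ and then applying the outer $\powerop$ over $\mathrm{Poly}(N)$ factors keeps numerators and denominators polynomially bounded. The decision version ($\qq(\emptyset) \overset{?}{\geq} q$, or its per-$\xs_{\JJ_0}$ analogue) is then one more polynomial-time comparison. The one place to be careful — and the only obstacle I foresee — is the multiplicity bookkeeping in the sum-sum case: one must avoid iterating over the exponentially large $\XX_{\JJ_1}$ and instead absorb the variables outside each factor's scope into the closed-form count $c_\II$; for min-min and max-max, idempotency removes even this subtlety, so there is essentially nothing left to prove.
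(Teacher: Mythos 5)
Your proposal is correct and uses essentially the same argument as the paper: exchange the order of the two iterated operations, count the multiplicity $|\XX_{\JJ_1 \setminus \II}|$ of each factor entry for the sum case, and invoke idempotency for min and max. The only differences are cosmetic refinements — you handle general marginal queries $\qq(\xs_{\JJ_0})$ rather than just the integral, and you verify bit-lengths explicitly — neither of which changes the substance of the proof.
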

\begin{proof}
To show that these inference problems are in \poly, we 
provide polynomial-time algorithms for them:

\noindent \bullitem\ $\sumop-\sumop$ is short for
\begin{align*}
  \qq(\emptyset) = \sum_{\xs} \sum_{\II} \ff_\II(\xs_\II)
\end{align*}
which asks for the sum over all assignments of $\xs \in \XX$, of the sum
of all the factors. It is easy to see that each factor value $\ff_\II(\xs_\II)\; \forall \II,\ \XX_\II$
is counted $\vert \XX_{\back \II} \vert$ times in the summation above.
Therefore we can rewrite the integral above as
\begin{align*}
  \qq(\emptyset) = \sum_{\II} \vert \XX_{\back \II} \vert \big ( \sum_{\xs_{\II}} \ff_{\II}(\xs_\II) \big )
\end{align*}
where the new form involves polynomial number of terms and therefore is easy to calculate.

\noindent \bullitem\ $\min-\min$ (similar for $\max-\max$) is short for
\begin{align*}
  \qq(\emptyset) = \min_{\xs} \min_{\II} \ff_\II(\xs_\II)
\end{align*}
where the query seeks the minimum achievable value of any factor.
We can easily obtain this by seeking the range of all factors and reporting the minimum value in polynomial time. 
\end{proof}

Max-sum and max-prod are widely studied and it is known that their decision version are \NP-complete~\cite{shimony1994finding}.
By reduction from satisfiability we can show that max-min inference~\cite{ravanbakhsh_minmax} is also \NP-hard.

\begin{proposition}\label{th:minmaxnp}
The decision version of max-min inference that asks $\max_{\xs} \min_{\II} \ff_{\II}(\xs_\II) \overset{?}{\geq} q$ is \NP-complete.
\end{proposition}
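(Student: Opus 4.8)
The plan is to prove \NP-completeness in two parts: membership in \NP, and \NP-hardness via reduction from \textsc{3-SAT}.

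\textbf{Membership in \NP.} First I would observe that the decision problem $\max_{\xs}\min_{\II}\ff_\II(\xs_\II)\overset{?}{\geq}q$ admits a short certificate: a complete assignment $\xs\in\XX$. Given such a witness, one evaluates every factor $\ff_\II(\xs_\II)$ — each in polynomial time by the factor-graph conditions in \refDefinition{def:fg} — takes the minimum over the polynomially many factors, and checks whether the result is $\geq q$. Hence the problem is in \NP.

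\textbf{\NP-hardness.} The plan is to reduce from \textsc{3-SAT}. Given a CNF formula $\phi$ with variables $u_1,\ldots,u_N$ and clauses $C_1,\ldots,C_m$, I would build a factor-graph over binary variables $\xx_1,\ldots,\xx_N$ (one per Boolean variable, with $\XX_i=\{0,1\}$) together with the semigroups $\semig_e=(\Re,\min)$ for expansion and $\semig_m=(\Re,\max)$ for marginalization. For each clause $C_j$ introduce a factor $\ff_{\II_j}$ on the (at most three) variables appearing in $C_j$, defined by $\ff_{\II_j}(\xs_{\II_j})=1$ if the assignment $\xs_{\II_j}$ satisfies $C_j$ and $\ff_{\II_j}(\xs_{\II_j})=0$ otherwise. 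Each such factor has a constant-size table, so all conditions of \refDefinition{def:fg} hold and $|\FF|=m=\mathrm{Poly}(N)$. Then for any complete assignment $\xs$, the value $\min_j \ff_{\II_j}(\xs_{\II_j})$ equals $1$ exactly when $\xs$ satisfies every clause, and $0$ otherwise. Therefore $\max_{\xs}\min_j\ff_{\II_j}(\xs_{\II_j})\geq 1$ if and only if $\phi$ is satisfiable, so setting the threshold $q=1$ completes the reduction, which is clearly computable in polynomial time.

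\textbf{Remarks on the main obstacle.} The hardness argument itself is essentially routine — the clause-indicator construction is the natural one. The one point that deserves care is confirming that this construction respects \emph{all} the structural constraints imposed earlier: that each factor is polynomially representable and polynomially evaluable (immediate, since each table has at most $8$ entries), that the range of the factors lies in the rationals so the $\min/\max$ semigroups are admissible in the sense of the hierarchy (here $\YY\subseteq\{0,1\}\subseteq\Qe^{\geq 0}$), and that $\JJ_0=\emptyset$ with $\JJ_1=\NN$ gives a legitimate integration problem with a polynomial-size answer. A secondary subtlety is handling clauses with fewer than three literals or repeated literals, but these only shrink the index set $\II_j$ and cause no difficulty. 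So the genuine ``work'' is just the verification that the encoding sits inside the framework of \refDefinition{def:inference}; the combinatorial heart of the reduction is a one-line observation.
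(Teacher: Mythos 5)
Your proposal is correct and follows essentially the same route as the paper: certificate-based membership in \NP, plus a reduction from 3-SAT with one indicator factor per clause (value $1$ if satisfied, a value below $1$ otherwise) and threshold $q=1$. The paper's version is terser — it allows ``any number less than one'' for unsatisfied clauses where you fix $0$ — but the argument is the same, and your extra checks against \refDefinition{def:fg} are harmless elaboration.
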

\begin{proof}
Given $\xs$ it is easy to verify the decision problem, so max-min decision belongs to \NP. 
To show \NP-completeness, we reduce the 3-SAT to a max-min inference problem, such that
3-SAT is satisfiable \textit{iff} the max-min value is $\qq(\emptyset) \geq 1$ and unsatisfiable otherwise. 

Simply define one factor per clause of 3-SAT, such that $\ff_{\II}(\xs_{\II}) = 1$
if $\xs_{\II}$ satisfies the clause and any number less than one otherwise.
With this construction, the max-min value $\max_{\xs} \min_{\II \in \FF} \ff_{\II}(\xs_\II)$ is one \emph{iff} the original SAT problem was satisfiable, otherwise it is less than one. This reduces 3-SAT to Max-Min-decision.
\end{proof}

This means all the problems in $\Psi_1(\emptyset, \emptyset)$ are in \NP\ (and in fact are complete w.r.t.~this complexity class).
In contrast, problems in $\Phi_1(\emptyset, \emptyset)$ are in \coNP, which is the class of decision problems in which the ``NO instances'' result has a polynomial time verifiable witness or proof.
Note that by changing the decision problem from $\qq(\emptyset) \overset{?}{\geq} q$ to $\qq(\emptyset) \overset{?}{\leq} q$, the complexity classes
of problems in $\Phi$ and $\Psi$ family are reversed (\ie problems in $\Phi_1(\emptyset, \emptyset)$ become $\NP$-complete and the problems in $\Psi_1(\emptyset, \emptyset)$ become \coNP-complete).

Among the members of $\Sigma_1(\{1\}, \emptyset)$, sum-product is known to be \PPclass-complete~\cite{littman2001stochastic,roth1993hardness}.
It is easy to show the same result for sum-min (sum-max) inference.
\begin{proposition}\label{th:summinsharpp}
The sum-min decision problem $\sum_{\xs} \min_{\II} \ff_\II(\xs_\II) \overset{?}{\geq} q$ is \PPclass-complete for $\YY = \{0,1\}$.
\end{proposition}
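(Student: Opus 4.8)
The plan is to treat membership and hardness separately; the whole argument rests on one observation: on the set $\{0,1\}$ the operations $\min$ and $\prodop$ coincide, since $a \cdot b = \min(a,b)$ for all $a,b \in \{0,1\}$.

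\emph{Membership in \PPclass.} With $\YY = \{0,1\}$, for every assignment $\xs \in \XX$ the quantity $\min_{\II} \ff_\II(\xs_\II)$ is $1$ when all factors evaluate to $1$ on $\xs$ and $0$ otherwise, so
\begin{align*}
  \qq(\emptyset) \;=\; \sum_{\xs} \min_{\II} \ff_\II(\xs_\II) \;=\; \big|\{\, \xs \in \XX \mid \ff_\II(\xs_\II)=1 \ \text{for all } \II \,\}\big| .
\end{align*}
This is a \sharpP\ function of the instance: a nondeterministic machine guesses $\xs$ with polynomially many bits and, by \refDefinition{def:fg}, checks all factors in polynomial time. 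Deciding whether a \sharpP\ function is at least an integer threshold $q$ is in \PPclass\ by the standard characterization of \PPclass\ as threshold counting (up to the usual harmless off-by-one between ``$\geq q$'' and ``$> q-1$''). Hence the sum-min decision problem is in \PPclass.

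\emph{\PPclass-hardness.} By the observation above, as a decision language the sum-min problem with $\YY = \{0,1\}$ is literally identical to the sum-product problem with $\YY = \{0,1\}$. The known \PPclass-hardness of sum-product \cite{littman2001stochastic,roth1993hardness} is proved by a reduction from a threshold version of \#3SAT (which is \PPclass-complete) in which each clause becomes its $\{0,1\}$-valued indicator factor; that reduction already targets sum-min, so \PPclass-hardness transfers verbatim. Equivalently, and self-containedly: given a 3-CNF formula $\phi$ with clauses $C_1,\dots,C_m$ over binary variables $x_1,\dots,x_n$ and an integer $q$, create one factor $\ff_{C_j}$ per clause with $\ff_{C_j}(\xs_{C_j}) = 1$ iff $\xs_{C_j}$ satisfies $C_j$ and $0$ otherwise (as in the proof of \refProposition{th:minmaxnp}, but now the ``unsatisfied'' value is exactly $0$). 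Then $\min_{\II} \ff_\II(\xs) = 1$ iff $\xs$ satisfies $\phi$, so $\sum_{\xs} \min_{\II}\ff_\II(\xs) = \#\phi$, and the constructed instance is a ``yes'' instance precisely when $\#\phi \geq q$.

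\emph{Main obstacle.} None of the steps is genuinely hard: the content is entirely in noticing that $\min$ and $\prodop$ agree on $\{0,1\}$ and in checking that the cited \PPclass-hardness proof for sum-product already lives in the binary-factor regime (so that restricting $\YY=\{0,1\}$ costs nothing). The only points that require a little care are the integrality of the threshold $q$ in the \sharpP/\PPclass\ argument and matching the ``$\geq$'' convention of the decision problem to the ``majority of accepting paths'' convention defining \PPclass.
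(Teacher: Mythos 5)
Your proof is correct and hinges on exactly the same observation as the paper's: $\min$ and $\prodop$ coincide on $\{0,1\}$, so sum-min with binary factors is sum-product with binary factors, and the membership argument (count accepting paths, threshold at $q$) is the paper's verbatim. The only divergence is the source of the hardness reduction: the paper reduces from the \PPclass-complete decision version of the $0$-$1$ matrix permanent, routed through the known encoding of the permanent as sum-product inference on a factor-graph, whereas you reduce directly from threshold-$\#$3SAT with clause-indicator factors. Both are sound; yours is more self-contained (no reliance on the permanent-to-factor-graph construction) and has the small added virtue of reusing the factor construction from \refProposition{th:minmaxnp}, while the paper's choice makes the kinship with the sum-product/permanent literature explicit. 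Your closing remarks about integrality of $q$ and the $\geq$ versus majority convention are the right caveats and do not affect correctness.
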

 \PPclass\ is the class of problems that are polynomially solvable using a non-deterministic Turing machine, where the acceptance condition is that the majority of computation paths accept.
\begin{proof}
To see that $\sum_{\xs} \min_\II \ff_{\II}(\xs_\II) \overset{?}{\geq} q$ is in \PPclass, 
 enumerate all $\xs \in \XX$ non-deterministically and for each assignment
calculate $\min_\II \ff_{\II}(\xs_\II)$ in polynomial time 
(where each path accepts iff $\min_\II \ff_{\II}(\xs_\II) = 1$) and accept
iff at least $q$ of the paths accept.

Given a matrix $\Dn \in \{0,1\}^{N \times N}$ the problem of calculating its permanent
\begin{align*}
  \mathsf{perm}(\Dn) = \sum_{\zs \in \sn} \prod_{i = 1}^{N} \Dn_{i, \zz_i}
\end{align*}
where $\sn$ is the set of permutations of $1,\ldots,N$ 
is \sharpP-complete and the corresponding decision problem is \PPclass-complete~\cite{valiant1979complexity}.
To show completeness w.r.t. \PPclass\ it is enough to reduce the problem of computing the matrix permanent to sum-min inference in a graphical model.

The problem of computing the permanent has been reduced to sum-product inference in graphical models  \cite{huang2009approximating}. However, when $\ff_{\II}(\xs_\II) \in \{0,1\}\;\forall \II$,
sum-product is isomorphic to sum-min. This is because $y_1 \times y_2 = \min(y_1,y_2) \forall y_i \in \{0,1\}$. Therefore, the problem of computing the permanent for such matrices reduces to sum-min
inference in the factor-graph of \cite{huang2009approximating}.
\end{proof}



\subsection{Complexity of general inference classes}
Let $\compclass(.)$ denote the complexity class of an inference class in the hierarchy.
In obtaining the complexity class of problems with $M > 1$, we use the following fact, which is also used in the polynomial hierarchy: $\Poly^{\NP} = \Poly^{\coNP}$~\cite{arora2009computational}. 
In fact $\Poly^{\NP^{\Aclass}} = \Poly^{\coNP^{\Aclass}}$, for any oracle $\Aclass$. 
This means that by adding a polynomial marginalization to the problems in $\Phi_M(\sumset, \pset)$ and $\Psi_M(\sumset, \pset)$,
we get the same complexity class $\compclass(\Delta_{M+1}(\sumset, \pset \cup \{M+1\}))$.
The following gives a recursive definition of complexity class for problems in the inference hierarchy.\footnote{
We do not prove the completeness w.r.t. complexity classes beyond the first level of the hierarchy and only assert the membership.
} Note that the definition of the complexity for each class is very similar to the recursive definition
of members of each class in \refEq{eq:hierarchymembers1} and \refEq{eq:hierarchymembers2}

\begin{theorem} The complexity of inference classes in the hierarchy is given by the recursion
\begin{align}
&\compclass(\Phi_{M+1}(\sumset, \pset)) =  \coNP^{\compclass(\Xi_M(\sumset, \pset) \back \Phi_M(\sumset, \pset))}\label{eq:compmin}\\
&\compclass(\Psi_{M+1}(\sumset, \pset)) =  \NP^{\compclass(\Xi_M(\sumset, \pset) \back \Psi_M(\sumset, \pset))} \label{eq:compmax}\\
&\compclass(\Sigma_{M+1}(\sumset\cup \{M+1\}, \pset)) = \PPclass^{\compclass(\Xi_M(\sumset, \pset) \back \Sigma_M(\sumset, \pset))} \label{eq:compsum}\\
&\compclass(\Delta_{M+1}(\sumset, \pset\cup \{M+1\})) =  \Poly^{\compclass(\Xi_M(\sumset, \pset))} \label{eq:comppoly}
\end{align}
where the base members are defined in \refEq{eq:basemembers} and belong to $\Poly$.
\end{theorem}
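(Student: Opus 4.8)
The plan is to prove all four identities simultaneously by induction on $M$, the number of marginalization operations, and --- in line with the footnote --- to establish only \emph{membership} of each inference class in the stated complexity class (genuine completeness is proved only at the first level). So the target at each stage is a containment $\subseteq$, and the base members of \refEq{eq:basemembers}, which only evaluate the expanded form at a fixed point, are taken to lie in $\Poly$.

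For the base case $M=1$ I would observe that $\Delta_1(\emptyset,\{1\})$, $\Psi_1(\emptyset,\emptyset)$, $\Phi_1(\emptyset,\emptyset)$, $\Sigma_1(\{1\},\emptyset)$ are exactly the nine one-marginalization problems listed just before the theorem, so Propositions~\ref{th:sumsum_poly}, \ref{th:minmaxnp}, \ref{th:summinsharpp} together with the cited \NP-completeness of max-sum / max-product and \PPclass-completeness of sum-product place them in $\Poly$, $\NP$, $\coNP$, $\PPclass$ respectively; since the level-$0$ base members lie in $\Poly$, these four classes coincide with $\Poly^{\Poly}$, $\NP^{\Poly}$, $\coNP^{\Poly}$, $\PPclass^{\Poly}$, which is the recursion instantiated at $M=0$.

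For the inductive step, given a member $\max$-$\xi$ of $\Psi_{M+1}(\sumset,\pset)$ with $\xi\in\Xi_M(\sumset,\pset)\setminus\Psi_M(\sumset,\pset)$, I would run the obvious relativized machine: nondeterministically write an assignment $\xs_{\JJ_{M+1}}$ (only polynomially many bits, since it fixes at most $N$ variables), then call an oracle for the decision version of the $M$-marginalization problem $\xi$ on the factor-graph with those values substituted, asking whether $\qq_\xi(\xs_{\JJ_{M+1}})\ge q$, and accept iff the oracle accepts; this gives $\max$-$\xi\in\NP^{\compclass(\text{class of }\xi)}$. The same idea with a deterministic enumerating machine gives $\Poly^{\cdot}$ for $\Delta_{M+1}$ (here $|\XX_{\JJ_{M+1}}|=\mathrm{Poly}(N)$), a universal machine gives $\coNP^{\cdot}$ for $\Phi_{M+1}$, and the usual ``threshold of a non-deterministic count'' construction gives $\PPclass^{\cdot}$ for $\Sigma_{M+1}$. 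Two points need checking: that an inference class is closed under substituting constants for a subset of variables (immediate, since it is defined for an arbitrary family of factors), so the inner oracle really does decide the restricted instances; and that the oracle class $\compclass(\Xi_M\setminus\mathcal{C}_M)$ on the right is a well-defined single complexity class. For the latter I would use $\sumset\cap\pset=\emptyset$: this forces at most one of $\Sigma_M,\Phi_M,\Delta_M$ to be non-empty for fixed $(\sumset,\pset,M)$ (they require $M\in\sumset$, $M\notin\sumset\cup\pset$, $M\in\pset$, and $\Pi_M=\emptyset$), so removing $\Psi_M$ or $\Phi_M$ from $\Xi_M$ leaves exactly one class, whose complexity the induction hypothesis supplies. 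In the two places where a complementary pair $\Phi_M\cup\Psi_M$ can survive --- $\Xi_M\setminus\Sigma_M$ in the $\Sigma_{M+1}$ rule and $\Xi_M$ itself in the $\Delta_{M+1}$ rule --- I would appeal respectively to closure of $\PPclass$ under complement and to the stated identity $\Poly^{\NP^{\Aclass}}=\Poly^{\coNP^{\Aclass}}$ (and more generally to $\NP^{\Aclass}=\NP^{\mathbf{co}\Aclass}$, $\coNP^{\Aclass}=\coNP^{\mathbf{co}\Aclass}$), so the value is unambiguous.

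The hard part is not the machine simulations, which are routine, but this last bookkeeping: showing that the set-differences $\Xi_M\setminus\mathcal{C}_M$ can never produce two genuinely incomparable complexities, only a single class or a complement-closed pair neutralized by $\Poly^{\NP^{\Aclass}}=\Poly^{\coNP^{\Aclass}}$, by closure of $\PPclass$ under complement, and by insensitivity of $\NP^{\Aclass}$/$\coNP^{\Aclass}$ oracle calls to complementing the oracle; and, in parallel, checking that the side conditions of the recursion ($M{+}1\in\sumset$ for the $\Sigma$ rule, $M{+}1\in\pset$ for the $\Delta$ rule, and the convention that notationally invalid classes are empty) are precisely what makes the inductive hypothesis apply at each step while keeping the families disjoint.
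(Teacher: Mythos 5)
Your proposal is correct and follows essentially the same route as the paper: the same relativized machines (nondeterministic/co-nondeterministic/majority/deterministic enumeration of $\xs_{\JJ_{M+1}}$ followed by an oracle call on the reduced factor-graph), the same key bookkeeping lemma that $\Xi_M(\sumset,\pset)\back\mathcal{C}_M(\sumset,\pset)$ collapses to a single class except where a complementary $\Phi_M\cup\Psi_M$ pair survives, and the same resolutions of that pair via closure of $\PPclass$ under complement and $\Poly^{\NP^{\Aclass}}=\Poly^{\coNP^{\Aclass}}$. The only cosmetic differences are that you ground the induction at $M=1$ via the explicit completeness propositions rather than at the $M=0$ base members, and that you make explicit the (correct, implicitly assumed) closure of inference classes under clamping variables to constants.
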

\begin{proof}
  Recall that our definition of factor graph ensures that $\qq(\xs)$ can be evaluated in polynomial time and therefore the base members are in $\Poly$ (for complexity of base members of $\Delta$ see \refTheorem{th:sumsum_poly}). We use these classes as the base of our induction and assuming the complexity classes above are correct for $M$, we show that are correct for $M+1$. We consider all the above statements one by one:
  
\noindent \bullitem\ \textit{Complexity for members of} $\Phi_{M+1}(\sumset, \pset)$:\\
 Adding an exponential-sized \emph{min}-marginalization to an inference problem with known complexity $\Aclass$,
 requires a Turing machine to non-deterministically enumerate 
$\zs_{\JJ_M} \in \XX_{\JJ_M}$ possibilities, then call the $\Aclass$
oracle with the ``reduced factor-graph'' -- in which $\xs_{\JJ_M}$ is clamped to $\zs_{\JJ_M}$ -- 
and reject iff any of the calls to oracle rejects. This means 
$\compclass(\Phi_{M+1}(\sumset, \pset)) = \coNP^{\Aclass}$.

Here, \refEq{eq:compmin} is also making another assumption expressed in the following claim.
\begin{claim}\label{claim:inproof}
All inference classes in 
$\Xi_M(\sumset, \pset) \back \Phi_M(\sumset, \pset)$ have the same complexity $\Aclass$.
\end{claim}
\begin{itemize}
  \item $M = 0$: the fact that $\qq(\xs)$ can be evaluated in polynomial time means that $\Aclass = \Poly$.
\item  $M > 0$:  $\Xi_M(\sumset, \pset) \back \Phi_M(\sumset, \pset)$  only contains one inference class -- that is exactly only one of the following cases
is correct:
\begin{itemize}
\item $M \in \sumset \; \Rightarrow \; \Xi_M(\sumset, \pset) \back \Phi_M(\sumset, \pset) = \Sigma_M(\sumset, \pset)$
\item $M \in \pset \; \Rightarrow \; \Xi_M(\sumset, \pset) \back \Phi_M(\sumset, \pset) =  \Delta_M(\sumset, \pset)$
\item $M \notin \sumset \cup \pset \; \Rightarrow \; \Xi_M(\sumset, \pset) \back \Phi_M(\sumset, \pset) =  \Psi_M(\sumset, \pset)$. \\
(in constructing the hierarchy we assume two consecutive marginalizations are distinct and the current marginalization is a minimization.)
\end{itemize}
But if $\Xi_M(\sumset, \pset) \back \Phi_M(\sumset, \pset)$ contains a single class, the inductive hypothesis ensures that all problems in $\Xi_M(\sumset, \pset) \back \Phi_M(\sumset, \pset)$ have the same complexity class $\Aclass$.
\end{itemize}
This completes the proof of our claim.

\noindent \bullitem\ \textit{Complexity for members of} $\Psi_{M+1}(\sumset, \pset)$:\\
Adding an exponential-sized \emph{max}-marginalization to an inference problem with known complexity $\Aclass$,
 requires a Turing machine to non-deterministically enumerate 
$\zs_{\JJ_M} \in \XX_{\JJ_M}$ possibilities, then call the $\Aclass$
oracle with the reduced factor-graph  
and accept iff any of the calls to oracle accepts. 
This means 
$\compclass(\Psi_{M+1}(\sumset, \pset)) = \NP^{\Aclass}$.
Here, an argument similar to that of \refClaim{claim:inproof} ensures that
$\Xi_M(\sumset, \pset) \back \Psi_M(\sumset, \pset)$
in \refEq{eq:compmax} contains a single inference class.

\noindent \bullitem\ \textit{Complexity for members of} $\Sigma_{M+1}(\sumset\cup \{M+1\}, \pset)$:\\
Adding an exponential-sized \emph{sum}-marginalization to an
inference problem with known complexity $\Aclass$,
 requires a Turing machine to non-deterministically enumerate 
$\zs_{\JJ_M} \in \XX_{\JJ_M}$ possibilities, then call the $\Aclass$
oracle with the reduced factor-graph  
and accept iff majority of the calls to oracle accepts. 
This means $\compclass(\Psi_{M+1}(\sumset, \pset)) = \PPclass^{\Aclass}$.
\begin{itemize}
\item $M = 0$: the fact that $\qq(\xs)$ can be evaluated in polynomial time means that $\Aclass = \Poly$.
\item  $M > 0$: 
  \begin{itemize}
\item $M \in \pset \; \Rightarrow \; \Xi_M(\sumset, \pset) \back \Sigma_M(\sumset, \pset) = \Delta_M(\sumset, \pset)$.
  \item $M \notin \pset \cup \sumset \; \Rightarrow \; \Xi_M(\sumset, \pset) \back \Sigma_M(\sumset, \pset) = \Psi_M(\sumset, \pset) \cup \Phi_M(\sumset, \pset)$: 
 despite the fact that $\Aclass = \compclass(\Psi_{M}(\sumset, \pset))$ is different from
$\Aclass' = \compclass(\Phi_{M}(\sumset, \pset))$, since \textit{$\PPclass$ is closed under complement},
which means $\PPclass^\Aclass = \PPclass^{\Aclass}$ and the recursive definition of complexity  \refEq{eq:compsum} remains correct.
  \end{itemize}
\end{itemize}

\noindent \bullitem\ \textit{Complexity for members of} $\Delta_{M+1}(\sumset, \pset\cup \{M+1\})$:\\
Adding a polynomial-sized marginalization to an
inference problem with known complexity $\Aclass$,
 requires a Turing machine to deterministically enumerate 
 $\zs_{\JJ_M} \in \XX_{\JJ_M}$ possibilities in polynomial time,
 and each time call the $\Aclass$ oracle with the reduced factor-graph  
and accept after some polynomial-time calculation. This means 
$\compclass(\Psi_{M+1}(\sumset, \pset)) = \Poly^{\Aclass}$. Here, there are
three possibilities:
\begin{itemize}
\item $M = 0$: here again  $\Aclass = \Poly$.
\item $M \in \sumset \; \Rightarrow \; \Xi_M(\sumset, \pset) = \Sigma_M(\sumset, \pset)$.
  \item $M \in \pset \; \Rightarrow \; \Xi_M(\sumset, \pset) = \Delta_M(\sumset, \pset)$.
   \item $M \notin \pset \cup \sumset \; \Rightarrow \; \Xi_M(\sumset, \pset) = \Psi_M(\sumset, \pset) \cup \Phi_M(\sumset, \pset)$, in which case since $\PPclass^{\NP^{\mathbb{B}}} = \PPclass^{\coNP^{\mathbb{B}}}$, the recursive definition of complexity in \refEq{eq:comppoly} remains correct.
\end{itemize}
\end{proof}

\begin{example}\label{example:marginalmap}
  Consider the marginal-MAP inference of \refEq{eq:marginalmap}.
The decision version of this problem, $\qq(\emptyset) \overset{?}{\geq} q$,
 is a member of $\Psi_{2}(\{1\}, \emptyset)$ which also includes
 $\max-\sumop-\min$ and $\max-\sumop-\max$.
The complexity of this class according to \refEq{eq:compmax} is $\compclass(\Psi^{2}(\{1\}, \emptyset)) = \NP^{\PPclass}$.
However, marginal-MAP is also known to be ``complete'' w.r.t. $\NP^{\PPclass}$~\cite{park2004complexity}.
Now suppose that the max-marginalization over $\xs_{\JJ_2}$ is polynomial (\eg $|\JJ_2|$ is constant). 
Then marginal-MAP belongs to $\Delta_{2}(\{1\}, \{2\})$ with complexity $\poly^{\PPclass}$.
This is because a Turing machine can enumerate all $\zs_{\JJ_2} \in \XX_{\JJ_2}$ in polynomial time and
call its $\PPclass$ oracle to see if 
  \begin{align*}
    &\qq(\xs_{\JJ_0} \mid \zs_{\JJ_2}) \overset{?}{\geq} q\\
     \text{where} \quad &\qq(\xs_{\JJ_0} \mid \zs_{\JJ_2}) = \sum_{\xs_{\JJ_2}} \prod_{\II} \ff_{\II}(\xs_{\II \back \JJ_2}, \zs_{\II \cap \JJ_2}) 
  \end{align*}
and \emph{accept} if any of its calls to oracle accepts, and rejects otherwise.
Here, $\ff_{\II}(\xs_{\II \back \JJ_2}, \zs_{\II \cap \JJ_2})$ is the reduced factor, in which all the variables in $\xs_{\JJ_2}$ are fixed to $\zs_{\JJ_2\cap \II}$.  
\end{example}

Here, \magn{Toda's theorem} \cite{toda1991pp} has an interesting implication w.r.t.~the hierarchy. 
This theorem states that  $\PPclass$ is as hard as the polynomial hierarchy, which means
$\min-\max-\min-\ldots-\max$ inference for an arbitrary, but constant, number of min and max operations appears below the
sum-product inference in the inference hierarchy.

\subsection{Complexity of the hierarchy}\label{sec:logical}
By restricting the domain $\RR$ to $\{0,1\}$, min and max become isomorphic to logical AND ($\wedge$) and OR ($\vee$) respectively,
where $1 \cong \truemath, 0 \cong \falsemath$. By considering the restriction of the inference hierarchy to these
two operations, we can express quantified satisfiability (QSAT) as inference in a graphical model, where $\wedge \cong \forall$ and $\vee \cong \exists$. Let each factor $\ff_\II(\xs_\II)$ be a disjunction --\eg $\ff(\xs_{i,j,k}) = \xx_i \vee \neg \xx_j \vee \neg \xx_k$. Then we have
\begin{align*}
\forall_{\xs_{\JJ_M}} \exists_{\xs_{\JJ_{M-1}}} \ldots \exists_{\xs_{\JJ_2}} \forall_{\xs_{\JJ_1}}  \bigwedge_{\II} \ff_\II(\xs_\II)\; \cong  
\min_{\xs_{\JJ_M}} \max_{\xs_{\JJ_{M-1}}} \ldots \max_{\xs_{\JJ_2}} \min_{\xs_{\JJ_1}} \min_{\II} \ff_\II(\xs_\II)
\end{align*}
By adding the summation operation, we can express the stochastic satisfiability~\cite{littman2001stochastic} and by generalizing the constraints from disjunctions 
we can represent any quantified constraint problem (QCP)~\cite{bordeaux2002beyond}.
QSAT, stochastic SAT and QCPs are all \pspace-complete,
where \pspace\ is the class of problems that can be solved by a (non-deterministic) Turing machine in polynomial space.
Therefore if we can show that inference in the inference hierarchy is in \pspace, 
it follows that inference hierarchy is in \pspace-complete as well.

\begin{algorithm}[h]
\SetKwInOut{Input}{input}\SetKwInOut{Output}{output}
\DontPrintSemicolon
\Input{$\bigopp{M}_{\xs_{\JJ_M}} \bigopp{{M-1}}_{\xs_{\JJ_{M-1}}} \ldots \bigopp{1}_{\xs_{\JJ_1}} \bigotimes_{\II} \ff_\II(\xs_\II)$}
\Output{$\qq(\xs_{\JJ_0})$}
\DontPrintSemicolon
 \For(\tcp{loop over the query domain}){ \textbf{each} $\zs_{\JJ_0} \in \XX_{\JJ_0}$}{
 \For(\tcp{loop over $\XX_{i_N}$}){ \textbf{each} $\zz_{i_N} \in \XX_{i_N}$}{
.\;
.\;
.\;
\For(\tcp{loop over $\XX_{i_1}$}){ \textbf{each} $\zz_{i_{1}} \in \XX_{i_1}$}{
$\qq_{1}(\zz_{i_1}) := \bigotimes_{\II} \ff_\II(\zs_\II)$;
}
$\qq_{i_2}(\zz_{i_2}) := \bigopp{\jfun(i_1)}_{\xx_{i_1}} \qq_1(\xx_{i_1})$\;
.\;
.\;
.\;
$\qq_{N}(\zz_{i_N}) := \bigopp{\jfun(i_{{N-1}})}_{\xx_{i_{{N-1}}}} \qq_{{N-1}}(\xx_{i_{{N-1}}})$\;
}
$\qq(\zs_{\JJ_{0}}) := \bigopp{\jfun(i_{{N}})}_{\xx_{i_N}} \qq_N(\xx_{i_N})$\;
}
\caption{inference in \pspace}\label{alg:pspace_inference}
\end{algorithm}

\begin{theorem}\label{th:hierarchy_complexity}
The inference hierarchy is \pspace-complete.
\end{theorem}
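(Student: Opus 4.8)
The plan is to establish the two inclusions separately: \pspace-hardness and membership in \pspace.

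\emph{Hardness.} I would lean on the reduction sketched immediately before the statement. Restricting every semigroup's carrier to $\{0,1\}$ turns $\min$ and $\max$ into logical $\wedge$ and $\vee$; then a QSAT instance $\forall_{\xs_{\JJ_M}}\exists_{\xs_{\JJ_{M-1}}}\cdots \bigwedge_\II \ff_\II(\xs_\II)$ with disjunctive clauses $\ff_\II$ becomes an instance of $\min$-$\max$-$\cdots$-$\min$ inference whose integral equals $1$ iff the QSAT instance is valid. The reduction is plainly polynomial time --- one binary-valued factor per clause, $M$ alternating quantifier blocks --- so the inference hierarchy (which contains this family) is \pspace-hard. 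The only thing to check is that the produced instances obey the structural conventions of \RefDefinition{def:inference} (non-empty index sets, pairwise-distinct consecutive marginalizations); this is exactly the ``any inference problem can be re-expressed'' remark, and merging or padding quantifier blocks handles it.

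\emph{Membership.} I would argue that \RefAlgorithm{alg:pspace_inference} runs in polynomial space. The algorithm is a depth-first variable-elimination sweep: the outer loop fixes a query assignment $\zs_{\JJ_0}\in \XX_{\JJ_0}$ (polynomially many choices, by the representation condition in \RefDefinition{def:inference}), and the $N$ nested loops eliminate the remaining variables $i_N,\ldots,i_1$ one at a time, where at depth $k$ the machine stores only the single-variable table $\qq_{i_k}(\cdot)$ indexed by $\xx_{i_k}\in\XX_{i_k}$. Since each $\XX_{i_k}$ is a fixed finite set, each such table holds $\OO(1)$ numbers, and at most $N+1$ of them are live at once, so the algorithm juggles only $\mathrm{Poly}(N)$ rationals at any moment; the loop counters are likewise small. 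It then remains to bound the bit-length of each stored rational.

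\emph{Bit-complexity (the crux).} Here I would invoke the hierarchy's standing assumptions: $\RR\subseteq\Qe^{\geq 0}\cup\{\pm\infty\}$, every $\ff_\II(\zs_\II)$ has polynomial bit-size, $|\FF|=\mathrm{Poly}(N)$, and --- the essential point --- product is barred from the marginalization semigroups, which is precisely what \RefClaim{th:product_pspace} is there to justify. Thus the innermost value $\qq_1=\bigotimes_\II \ff_\II(\zs_\II)$ is a product of $\mathrm{Poly}(N)$ rationals each of polynomial size, hence itself of polynomial bit-size; and every later elimination step applies $\min$, $\max$, or a $\sumop$ over a fixed-size domain to numbers already of polynomial size, so (a bounded-length sum, or a selection among them) the size stays polynomial. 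Hence every configuration of the algorithm fits in space polynomial in $N$, giving the inference hierarchy $\subseteq\pspace$; combined with hardness this yields \pspace-completeness. I expect the genuine work to be this last step: confirming that, through $N$ successive eliminations, the ban on product in the marginalizations together with $|\FF|=\mathrm{Poly}(N)$ really does keep every intermediate rational polynomially sized. The combinatorial skeleton --- depth-first elimination storing one tensor slice per recursion level --- is routine; the substance is that the numbers never blow up.
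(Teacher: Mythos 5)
Your proposal is correct and follows essentially the same route as the paper: \pspace-hardness via the QSAT reduction sketched in \refSection{sec:logical}, and membership via the nested-loop elimination of \RefAlgorithm{alg:pspace_inference} that keeps only one single-variable table per recursion level. Your added bit-complexity argument (that barring product from the marginalization semigroups, per \RefClaim{th:product_pspace}, keeps every intermediate rational polynomially sized) is a point the paper's proof leaves implicit, and it is a worthwhile explicit check rather than a departure from the paper's argument.
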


\begin{proof} 
To prove that a problem is \pspace-complete, we have to show that 1) it is in \pspace\ and 2) a \pspace-complete problem reduces to it.
We already saw that QSAT, which is \pspace-complete, reduces to the inference hierarchy.
But it is not difficult to show that inference hierarchy is contained in \pspace.
Let 
  \begin{align*}
    \qq(\xs_{\JJ_0}) = \bigopp{M}_{\xs_{\JJ_M}} \bigopp{M-1}_{\xs_{\JJ_{M-1}}} \ldots \bigopp{1}_{\xs_{\JJ_1}} \bigotimes_{\II} \ff_\II(\xs_\II)
  \end{align*}
be any inference problem in the hierarchy.
We can simply iterate over all values of $\zs \in \XX$ in nested loops or using a recursion.
Let $\jfun(i): \{1,\ldots,N\} \to \{1,\ldots,M\}$ be the index of the marginalization that involves $\xx_i$ -- that is $i \in \JJ_{\jfun(i)}$. Moreover let $i_1,\ldots,i_N$ be an ordering of variable indices
such that $\jfun(i_k) \leq \jfun(i_{k+1})$.
\RefAlgorithm{alg:pspace_inference} uses this notation to demonstrate this procedure using nested loops.
Note that here we loop over individual domains $\XX_{i_k}$ rather than $\XX_{\JJ_m}$ and track
only temporary tuples $\qq_{i_k}$, so that the space complexity remains polynomial in $N$. 

\end{proof}


\section{Polynomial-time inference}\label{sec:gdl}
Our definition of inference 
was based on an expansion operation $\otimes$ and one or more marginalization operations $\opp{1}, \ldots, \opp{M}$.
If we assume only a single marginalization operation, polynomial time inference
is still not generally possible.
However, if we further assume that the expansion operation is distributive
over marginalization and the factor-graph has no loops,
exact polynomial time inference is possible. 
\begin{definition}
A \magn{commutative semiring} $\semiring=(\RR, \oplus, \otimes)$ is the combination of two 
commutative semigroups $\semig_e = (\RR, \otimes)$ and $\semig_m = (\RR, \oplus)$ with two additional properties
\begin{itemize} 
\item identity elements $\identt{\oplus}$ and $\identt{\otimes}$ such that $\identt{\oplus} \oplus a = a$ and $\identt{\otimes} \otimes a = a$. Moreover  $\identt{\oplus}$ is an \magn{annihilator} for $\semig_e = (\otimes, \RR)$: $a \otimes \identt{\oplus} = \identt{\oplus}\quad \forall a \in \RR$.\footnote{
When dealing with reals, this is $\identt{\oplus} = 0$; note $a \times 0 = 0$. Indeed it may be useful to view $\identt{\otimes}$ as $1 \in \Re$ and $\identt{\oplus}$ as $0 \in \Re$.}

\item distributive property: $$a \otimes (b \oplus c) = (a \otimes b) \oplus (a \otimes b)\quad \forall a,b,c \in \RR$$
\end{itemize}
\end{definition}

The mechanism of efficient inference using distributive law can be seen in a simple example: instead of  calculating $ \min(a +  b , a + c)$,
using the fact that summation distributes over minimization, we may instead obtain the same result using $a + \min(b , c)$, which requires fewer operations.

\begin{example}\label{example:semirings}
The following are some examples of commutative semirings:
\begin{easylist}
& Sum-product $(\Re^{\geq 0}, + , \times)$. 
& Max-product $(\Re^{\geq 0} \cup  \{-\infty\}, \max, \times)$ and $(\{0,1\}, \max, \times)$. 
& Min-max $(\settype{S}, \min, \max)$ on any ordered set $\settype{S}$. 
& Min-sum $(\Re \cup \{ \infty \}, \min, +)$ and $(\{0,1\}, \min, +)$. 
& Or-and $(\{\truemath,\falsemath\}, \vee, \wedge)$.
& Union-intersection $(2^{\SS}, \cup, \cap)$ for 
any power-set $2^{\SS}$.
& The semiring of natural numbers with greatest common divisor and least common multiple $(\settype{N},\mathsf{lcm}, \mathsf{gcd})$. 
& Symmetric difference-intersection semiring for any power-set $(2^\SS,\nabla, \cap)$.
\end{easylist}
Many of the semirings above are isomorphic --\eg $y' \cong -\log(y)$ defines an isomorphism between min-sum and max-product. 
It is also easy to show that the or-and semiring is 
isomorphic to min-sum/max-product semiring on $\RR = \{0,1\}$.
\end{example}

The inference problems in the example above have different properties indirectly inherited from their commutative semirings:
for example, the operation $\min$ (also $\max$) is a \magn{choice function},
 which means $\min_{a \in \settype{A}} a \quad \in \settype{A}$. 
The implication is that if the summation ($\oplus$) of the semiring is  
$\min$ (or $\max$), we can replace it with $\arg_{\xs_{\JJ_M}} \max$
and (if required) recover $\qq(\emptyset)$
using $\qq(\emptyset) = \bigotimes_{\II} \ff_\II(\xs^*)$ in polynomial time.


As another example, since both operations have inverses, sum-product is a \magn{field} \cite{pinter2012book}. 
The availability of inverse for $\otimes$ operation -- \ie when $\semig_e$ is an Abelian group -- 
has an important implication for inference:
the expanded form of \refEq{eq:expanded} can be normalized, and we may inquire about \textbf{normalized marginals}
\begin{align}
\quad \pp(\xs_{\JJ}) \quad = \quad &  \bigoplus_{ \xs_{\back \JJ}}  \pp(\xs)&\label{eq:semiring_marginalization}\\
\text{where}\quad \pp(\xs) \quad \defeq \quad & \frac{1}{\qq(\emptyset)} \otimes \big (\bigotimes_{\II}\ff_{\II}(\xs_{\II}) \big ) &\quad \text{if} \quad \qq(\emptyset) \neq \identt{\oplus}\label{eq:p}\\
\pp(\xs) \quad \defeq \quad \identt{\oplus} &\quad \text{if} \quad \qq(\emptyset) = \identt{\oplus} \label{eq:specialp}
\end{align}
where $\pp(\xs)$ is the normalized joint form. We deal with the case where the integral evaluates to the annihilator as a special case because
division by annihilator may not be well-defined.
This also means, when working with normalized expanded form and normalized marginals, we always have
$\bigoplus_{\xs_{\JJ}} \pp(\xs_{\JJ}) = \identt{\otimes}$ 
\begin{example}
Since $\semig_{e} = (\Re^{>0}, \times)$ and $\semig_{e} = (\Re, +)$
are both Abelian groups, min-sum and sum-product inference have normalized marginals. 
For min-sum inference this means $\min_{\xs_{\JJ}}\pp(\xs_{\JJ}) = \identt{\sumop} = 0$. However, for min-max inference, since $(\SS, \max)$ is not Abelian, normalized marginals are not defined.
\end{example}

We can apply the identity and annihilator of a commutative semiring to define constraints.
\begin{definition}\label{def:constraint}
A \magn{constraint} is a factor $\ff_\II: \XX_{\II} \to \{\identt{\otimes}, \identt{\oplus}\}$
whose range is limited to identity and annihilator of the expansion monoid.\footnote{Recall that a monoid is a semigroup with an identity. 
The existence of identity here is a property of the semiring.}
\end{definition}
Here, $\ff_{\II}(\xs) = \identt{\oplus}$ iff $\xs$ is forbidden and 
$\ff_{\II}(\xs) = \identt{\otimes}$ iff it is permissible. 
A \magn{constraint satisfaction problem} (CSP) 
is any inference problem on a semiring
in which all factors are constraints. Note that this allows definition 
of the ``same'' CSP on any commutative semiring and also indicates that inference in general semirings should be difficult. The following theorem formalizes this intuition.
\begin{theorem}\label{th:semiring_inference}
Inference in any commutative semiring is \NP-hard under randomized polynomial-time reduction.
\end{theorem}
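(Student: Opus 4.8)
The plan is to prove NP-hardness by reducing an NP-complete problem to a constraint satisfaction problem (CSP) expressed in an \emph{arbitrary} commutative semiring $\semiring = (\RR, \oplus, \otimes)$. The natural candidate is 3-SAT, which we already used in \refProposition{th:minmaxnp}. Given a 3-SAT instance, build a factor-graph over the Boolean variables with one constraint factor per clause, where $\ff_\II(\xs_\II) = \identt{\otimes}$ iff $\xs_\II$ satisfies the clause and $\ff_\II(\xs_\II) = \identt{\oplus}$ otherwise; this is a legitimate constraint in the sense of \refDefinition{def:constraint} and uses only the two distinguished semiring elements. The expanded form is $\qq(\xs) = \bigotimes_\II \ff_\II(\xs_\II)$, and since $\identt{\oplus}$ annihilates $\otimes$, we get $\qq(\xs) = \identt{\otimes}$ for satisfying assignments and $\qq(\xs) = \identt{\oplus}$ otherwise. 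The target inference query is the integral $\qq(\emptyset) = \bigoplus_{\xs} \qq(\xs)$ with marginalization semigroup $\semig_m = (\RR, \oplus)$.

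The key issue is reading off satisfiability from $\qq(\emptyset) = \bigoplus_{\xs} \qq(\xs)$. If the instance is unsatisfiable, every term is $\identt{\oplus}$, so $\qq(\emptyset) = \identt{\oplus}$. If it is satisfiable, $\qq(\emptyset)$ is an $\oplus$-sum of some copies of $\identt{\otimes}$ and some copies of $\identt{\oplus}$, i.e. $\qq(\emptyset) = \identt{\otimes} \oplus \cdots \oplus \identt{\otimes}$ ($k$ times, $k \geq 1$ the number of satisfying assignments). In a general semigroup this need \emph{not} differ from $\identt{\oplus}$ — the semiring axioms do not forbid $\identt{\otimes} \oplus \identt{\otimes} \oplus \cdots = \identt{\oplus}$ for the particular multiplicity $k$ that arises. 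This is exactly where the ``randomized polynomial-time reduction'' in the statement enters: I would randomly sub-sample / restrict the set of satisfying assignments (e.g.\ by adding random parity-type constraints à la Valiant--Vazirani) so that, with non-negligible probability, the number of surviving satisfying assignments is exactly $1$, in which case $\qq(\emptyset) = \identt{\otimes} \neq \identt{\oplus}$ and the decision question $\qq(\emptyset) \overset{?}{=} \identt{\oplus}$ cleanly separates the two cases. The standard Valiant--Vazirani isolation lemma gives such a restriction with probability $\Omega(1/n)$, and the added constraints are themselves semiring constraints (parity can be encoded as a product of small clause-like constraints), so the reduced instance is still a CSP on the same semiring.

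So the steps, in order, are: (1) formalize the clause-to-constraint encoding and verify it is a valid factor-graph/CSP on $\semiring$ with $\qq(\xs)$ computable in polynomial time; (2) observe $\qq(\emptyset) = \identt{\oplus}$ for UNSAT instances unconditionally; (3) apply Valiant--Vazirani isolation, adding random XOR constraints (encoded via the semiring's identity/annihilator) to reduce to the unique-satisfying-assignment case with probability $\Omega(1/n)$; (4) in that case $\qq(\emptyset) = \identt{\otimes}$, so $\qq(\emptyset) \overset{?}{=} \identt{\oplus}$ answers SAT with one-sided error, giving a randomized poly-time reduction from SAT and hence NP-hardness of semiring inference. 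The main obstacle is step (3): one must be careful that XOR/parity constraints over $\{0,1\}$ really can be expressed using only $\identt{\otimes}$ and $\identt{\oplus}$ as factor values (they can, since a parity constraint on $r$ bits is just the indicator of an affine subspace, realizable either as one factor if $r = O(\log n)$ by the tabular assumption, or as a conjunction of small auxiliary-variable gadgets), and that the isolation argument only uses counting of assignments, which is insensitive to the semiring — so the probabilistic guarantee transfers verbatim.
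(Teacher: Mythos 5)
Your proposal is correct and is essentially the paper's own argument: the paper also encodes each clause as a constraint factor taking values in $\{\identt{\otimes},\identt{\oplus}\}$ and observes that the integral distinguishes zero from exactly one satisfying assignment, handling the multiplicity problem you identify by reducing from unique satisfiability (USAT) and citing Valiant--Vazirani for its \NP-hardness under randomized reductions. The only difference is one of packaging --- you unroll the Valiant--Vazirani isolation step inside the reduction (and correctly note the need to encode the parity constraints as semiring constraints), whereas the paper keeps its reduction deterministic and pushes the randomness into the hardness of USAT itself.
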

\begin{proof}
To prove that inference in any semiring $\semiring = (\RR, \identt{\oplus}, \identt{\otimes})$ is \NP-hard under randomized polynomial
 reduction, we deterministically reduce \textit{unique satisfiability} (USAT) to an inference problems on any semiring.
USAT is a so-called ``promise problem'', that asks whether a satisfiability problem that is promised to have either zero or one satisfying assignment is satisfiable.
\cite{valiant1986np} prove that a polynomial time randomized algorithm (\RP) for USAT
implies that \RP=\NP.

For this reduction consider a set of binary variables $\xs \in \{0,1\}^{N}$, 
one per each variable in the given instance of USAT.
For each clause, define a constraint factor $\ff_{\II}$ such that $\ff_{\II}(\xs_{\II}) = \identt{\otimes}$ if $\xs_{\II}$ satisfies that clause and  $\ff_{\II}(\xs_{\II}) = \identt{\oplus}$ otherwise. This means, $\xs$ is a satisfying assignment for USAT iff $\qq(\xs) = \bigotimes_{\II} \ff_\II(\xs_{\II}) = \identt{\otimes}$. If the instance is unsatisfiable, the integral $\qq(\emptyset) = \bigoplus_{\xs} \identt{\oplus} = \identt{\oplus}$ (by definition of $\identt{\oplus}$).
If the instance is satisfiable there is only a single instance $\xs^*$ for which $\qq(\xs^*) = \identt{\otimes}$, and therefore the integral evaluates to $\identt{\otimes}$.
Therefore we can decide the satisfiability of USAT by performing inference on any semiring, by only relying on the properties of identities.  

\end{proof}
\begin{example}
Inference on xor-and semiring $(\{\truemath,\falsemath\}, \xor, \wedge)$, where each factor has
a disjunction form, is called parity-SAT, 
which asks whether the number of SAT solutions is even or odd. A corollary to theorem~\ref{th:semiring_inference} is that parity-SAT is \NP-hard under randomized reduction,
which is indeed the case \cite{valiant1986np}.
\end{example}
We find it useful to use the same notation for the 
 \magn{identity function} $\ident(\mathrm{condition})$ -- \eg
\begin{align}\label{eq:identity}
  \ident(\mathrm{cond.}) \; \defeq \;  \left \{ 
\begin{array}{r c c c}
 &(+,\times) & (\min,+) & (\min,\max)\\
\mathrm{cond.} = \truemath & 1 & 0 & -\infty\\
\mathrm{cond.} = \falsemath & 0 & +\infty & +\infty\\
\end{array}   
\right .
\end{align}
where the intended semiring for $\ident(.)$ function will be clear from the context.

\subsection{Distributive law and its limits}\label{sec:bp}
A naive approach to inference over commutative semirings  
\begin{align}\label{eq:semiringinference}
\qq(\xs_{\JJ}) \quad = \quad \bigoplus_{\xs_{\back \JJ}} \bigotimes_{\II} \ff_{\II}(\xs_\II) 
\end{align}
or its normalized version (\refEq{eq:semiring_marginalization}), involves constructing a complete $N$-dimensional array of $\qq(\xs)$
using the tensor product $\qq(\xs) \; = \; \bigotimes_\II \ff_\II(\xs_\II)$ and then perform $\oplus$-marginalization.
However, the number of elements in $\qq(\xs)$ is $\vert \XX \vert$, which is exponential in $N$, the number of variables.

If the factor-graph is loop free, 
we can use distributive law to make inference tractable. 
Assuming $\qq(\xs_\KK)$ (or $\qq(\xx_k)$) is the marginal of interest, form a tree with $\KK$ (or $k$) as its root.
Then starting from the leaves, using the distributive law, we can move the $\oplus$ inside the $\otimes$ 
 and define ``messages'' from leaves towards the root as follows:
\begin{align}
\msgq{i}{{{\II}}}(\xx_i) \quad & = \quad  \bigotimes_{\JJ \in \nb{i} \back \II} \msgq{{\JJ}}{i}(\xx_i) \label{eq:miI_semiring}\\
\msgq{{{\II}}}{i}(\xx_i) \quad & = \quad  \bigoplus_{ \xs_{\back i}} \ff_{{\II}}(\xs_{\II})
\bigotimes_{j \in \nb \II \back i} \msgq{j}{{\II}}(\xx_{j})  \label{eq:mIi_semiring}
\end{align}
where \refEq{eq:miI_semiring} defines the message from a variable to a factor, 
closer to the root and similarly \refEq{eq:mIi_semiring} defines the message from factor ${\II}$ to a variable $i$ closer to the root.
Here, the distributive law allows moving the $\bpplus$ over the domain $\XX_{\II \back i}$ from outside to inside of \refEq{eq:mIi_semiring} --
the same way $\oplus$ moves its place in $(a \otimes b) \oplus (a \otimes c)$ to give $a \otimes (b \oplus c)$, where 
$a$ is analogous to a message.

\begin{figure}
\centering
\includegraphics[width=.5\textwidth]{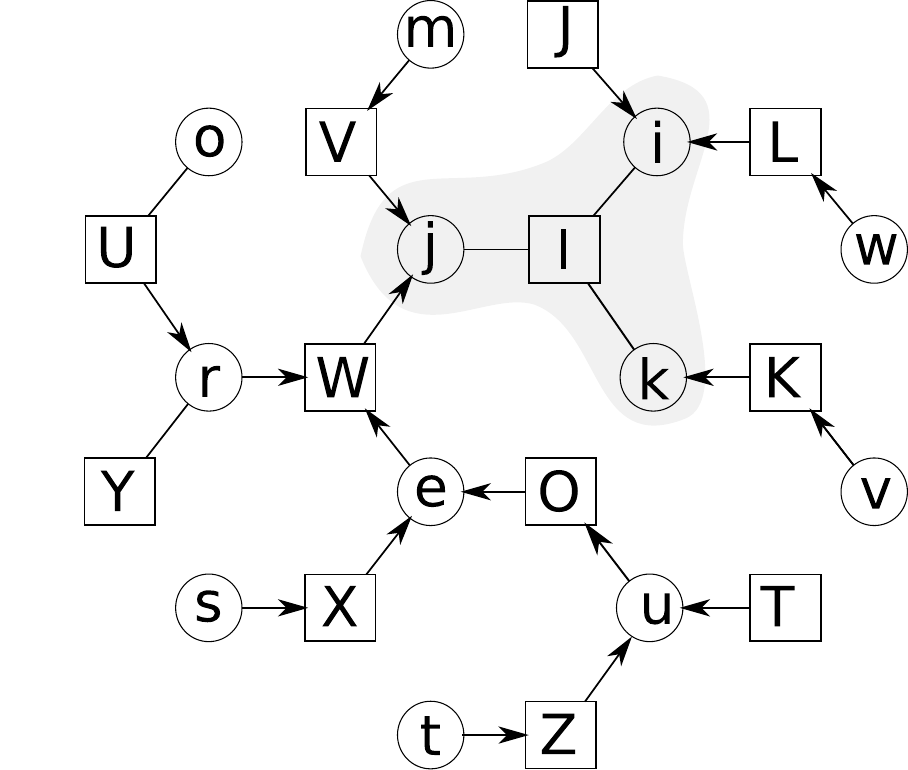}
\caption[Belief Propagation on a loop-free factor-graph]{The figure shows a loop-free factor-graph and the direction of 
messages sent between variable and factor nodes in order to calculate the marginal over the grey region.}
\label{fig:bp-tree}
\end{figure}

By starting from the leaves, and calculating the messages towards the root, we obtain 
the marginal over the root node as the product of incoming messages
\begin{align}
\qq(\xx_{k}) \quad &= \quad  \bigotimes_{\II \in \nb k} \msgq{\II}{k}(\xx_k)
\end{align}
In fact, we can assume any
subset of variables $\xs_{\AAA}$ (and factors within those variables) to be the root. Then, the set of all incoming messages to $\AAA$, 
produces the marginal 
\begin{align}\label{eq:bpmarg_region}
\qq(\xs_{\AAA}) \quad = \quad \left ( \bigotimes_{\II \subseteq \AAA} \ff_{\II}(\xs_{\II}) \right ) \left (\bigotimes_{i \in \AAA, \JJ \in \nb i, \JJ \not \subseteq \AAA} \msgq{\JJ}{i}(\xs_i) \right )
\end{align}

\begin{example}
Consider the joint form represented by the factor-graph of  \refFigure{fig:bp-tree} 
\begin{align*}
  \qq(\xs) \quad = \quad \bigotimes_{\AAA \in \{\II,\JJ,\KK, \mathrm{L},\mathrm{O},\mathrm{T},\mathrm{U},\mathrm{V},\mathrm{W},\mathrm{X},\mathrm{Y},\mathrm{Z}\}}  \ff_{\AAA}(\xs_{\AAA}) 
\end{align*}
and the problem of calculating the marginal over $\xs_{\{i,j,k\}}$ (\ie the shaded region).
\begin{align*}
  \qq(\xs_{\{i,j,k\}}) \quad = \quad \bigoplus_{\xs_{\back \{i,j,k\}}} \quad \bigotimes_{\AAA \in \{\II,\JJ,\KK, \mathrm{L},\mathrm{O},\mathrm{T},\mathrm{U},\mathrm{V},\mathrm{W},\mathrm{X},\mathrm{Y},\mathrm{Z}\}}  \ff_{\AAA}(\xs_{\AAA}) 
\end{align*}

We can move the $\oplus$ inside the $\otimes$ to obtain
\begin{align*}
 \qq(\xs_{\{i,j,k\}}) \quad = \quad  \ff_{\II}(\xs_{\II}) \bptimes \msgq{\mathrm{L}}{i}(\xx_i) \bptimes \msgq{\mathrm{K}}{i}(\xx_i) \bptimes \msgq{\mathrm{V}}{j}(\xx_j) \bptimes \msgq{\mathrm{W}}{j}(\xx_j) \bptimes \msgq{\mathrm{K}}{k}(\xx_k)
\end{align*}
where each term $\msgq{\AAA}{i}$ factors the summation on the corresponding sub-tree.
For example 
\begin{align*}
  \msgq{\mathrm{L}}{i} \quad = \quad \bigoplus_{\xx_w} \ff_{\mathrm{L}}(\xs_{\mathrm{L}})
\end{align*}

Here the message $\msgq{\mathrm{W}}{j}$ is itself a computational challenge
\begin{align*}
  \msgq{\mathrm{W}}{j} \quad = \quad \bigoplus_{\xs_{\back j}} \bigotimes_{\AAA \in \{\mathrm{W}, \mathrm{U}, \mathrm{Y}, \mathrm{X}, \mathrm{O}, \mathrm{T}, \mathrm{Z} \} }\ff_{\mathrm{\AAA}}(\xs_{\AAA})
\end{align*}

However we can also decompose this message over  sub-trees
\begin{align*}
  \msgq{\mathrm{W}}{j} \quad = \quad \bigoplus_{\xs_{\back j}} \ff_{\mathrm{\AAA}}(\xs_{\AAA})
\bptimes \msgq{e}{\mathrm{W}}(\xx_{e}) \bptimes \msgq{r}{\mathrm{W}}(\xx_{r})
\end{align*}
where again, using the distributive law, $\msgq{e}{\mathrm{W}}$ and  $\msgq{r}{\mathrm{W}}$ 
further simplify based on the incoming messages to the variable nodes $\xx_r$ and $\xx_e$.
\end{example}

This procedure is known as Belief Propagation (BP), which is sometimes prefixed with the corresponding semiring -- \eg sum-product BP.
Even though BP is only guaranteed to produce correct answers when the factor-graph is a
tree (and few other cases \cite{Aji1998,Weiss2001a,bayati2005maximum,weller2013map}), 
it performs surprisingly well when applied as a fixed point iteration to graphs with loops \cite{Murphy1999,gallager1962low}. In the case of loopy graphs, the message updates are repeatedly applied in the hope of convergence. 
Here, for numerical stability, when the $\otimes$ operator has an inverse, the messages are normalized. We use $\propto$ to indicate this normalization according to the mode of inference  
\begin{align}
\msg{{{\II}}}{i}(\xx_i) \quad & \propto \quad   \bigoplus_{ \xs_{\back i}} \ff_{{\II}}(\xs_{\II})
\bigotimes_{j \in \nb \II \back i} \msg{j}{{\II}}(\xx_{j})  & \;\propto\; \MSG{\II}{i}(\msgss{\nb \II \back i}{\II})(\xx_i)\label{eq:mIi_semiring_norm}\\
\msg{i}{{{\II}}}(\xx_i) \quad & \propto \quad   \bigotimes_{\JJ \in \nb{i} \back \II} \msg{{\JJ}}{i}(\xx_i)  &\;\propto\; \MSG{i}{\II}(\msgss{\nb i \back \II}{i})(\xx_i)\label{eq:miI_semiring_norm}\\
\ph(\xs_{\II}) \quad &\propto \quad   \ff_{\II}(\xs_{\II}) \bigotimes_{i \in \nb \II} \msgs{i}{\II}(\xx_i)  &\quad \label{eq:marg_semiring_factor_norm}\\
\ph(\xx_{i}) \quad & \propto \quad    \bigotimes_{\II \in \nb i} \msg{\II}{i}(\xx_i)  &\quad \label{eq:marg_semiring_norm}
\end{align}
Here, for general graphs, $\ph(\xx_i)$ and $\ph(\xs_{\II})$ are approximations to $\pp(\xx_i)$
and $\pp(\xs_\II)$ of \refEq{eq:semiring_marginalization}.
The \textbf{functionals} $\MSG{i}{\II}(\msg{\nb i \back \II}{i})(.)$ and 
$\MSG{\II}{i}(\msgss{\nb \II \back i}{\II})(.)$ cast the BP message updates as an operator on a subset of incoming messages -- \ie $\msgss{\nb i \back \II}{i} = \{ \msg{\JJ}{i} \mid \JJ \in \nb i \back \II \}$. We use these functional notation in presenting the algebraic form of survey propagation in \refSection{sec:sp}.

\subsection{The limits of message passing}\label{sec:limits}
By observing the application of distributive law in semirings, a natural question to ask is: can we use distributive law for polynomial time inference on loop-free graphical models over any of the inference problems at higher levels of inference hierarchy or in general, for any inference problem with more than one marginalization operation?
The answer to this question is further motivated by the fact that, when loops exists, the same scheme may become a powerful approximation technique. 
When we have more than one marginalization operations, a natural assumption in using distributive law 
is that the expansion operation distributes over all the marginalization operations -- \eg  as in min-max-sum (where sum distributes over both min and max), min-max-min, xor-or-and.
\index{marginalization!two operations}
\index{inference!efficient}
Consider the simplest case with three operators $\opp{1}$, $\opp{2}$ and $\otimes$, where $\otimes$ distributes over both $\opp{1}$ and $\opp{2}$.
Here the integration problem is
$$\qq(\emptyset) \quad = \quad \bigopp{2}_{\xs_{\JJ_2}} \bigopp{1}_{\xs_{\JJ_1}} \bigotimes_{\II} \ff_{\II}(\xs_\II)$$
where $\JJ_1$ and $\JJ_2$ partition $\{1,\ldots,N\}$.

In order to apply distributive law for each pair $(\opp{1}, \otimes)$ and $(\opp{2}, \otimes)$, we need to be able to commute $\opp{1}$ and $\opp{2}$ operations.
That is, we require 
\begin{align}\label{eq:op_commute}
\bigopp{1}_{\xs_{\AAA}} \bigopp{2}_{\xs_{\BBB}} \fg(\xs_{\AAA \cup \BBB}) = \bigopp{2}_{\xs_\BBB} \bigopp{1}_{\xs_{\AAA}} \fg(\xs_{\AAA \cup \BBB}).
\end{align}
for the specified $\AAA \subseteq \JJ_1$ and $\BBB \subseteq \JJ_2$.

Now, consider a simple case involving two binary variables $\xx_i$ and $\xx_j$, where $\fg(\xs_{\{i,j\}})$ is 
\vspace{.05in}
\begin{center}
\scalebox{.8}{
\begin{tabu}{r r |[2pt] c | c }
\multicolumn{2}{c}{}&\multicolumn{2}{c }{$\xx_j$}\\
\multicolumn{2}{c}{}&0&1\\\tabucline[2pt]{3-4}
\multirow{2}{*}{$\xx_i$}&0&a&\multicolumn{1}{c |[2pt]}{b}\\\cline{2-4}
&1&c&\multicolumn{1}{c |[2pt]}{d}\\\tabucline[2pt]{3-4}
\end{tabu}
}
\end{center}
\vspace{.05in}
Applying \refEq{eq:op_commute} to this simple case (\ie $\AAA = \{i\}, \BBB=\{j\}$), we require
$$
  (a \opp{1} b) \opp{2} (c \opp{1} d) \quad = \quad (a \opp{2} b) \opp{1} (c \opp{2} d). 
$$

The following theorem leads immediately to a  negative result:
\begin{theorem}\cite{eckmann1962group}:
\begin{align*}
  (a \opp{1} b) \opp{2} (c \opp{1} d) \; = \; (a \opp{2} b) \opp{1} (c \opp{2} d)
\quad \Leftrightarrow \quad \opp{1} = \opp{2} \quad \forall a,b,c
\end{align*}
\end{theorem}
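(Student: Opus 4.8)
The plan is to prove the biconditional in two pieces, with essentially all the content in the forward implication, which is the classical Eckmann--Hilton collapse. The reverse direction is immediate: if $\opp{1}=\opp{2}=:\oplus$ then both sides of the displayed identity are literally the same expression $(a\oplus b)\oplus(c\oplus d)$, so there is nothing to prove. The forward direction is where I would spend effort, and the first thing to pin down is a point the bare statement glosses over: the argument needs each of the two operations to have an identity element (equivalently, to be a commutative monoid, not merely a commutative semigroup). That hypothesis is harmless here, since the operations actually at play in the inference hierarchy are $\sumop,\prodop,\min,\max$, which carry the identities $0,1,+\infty,-\infty$ on their (extended) domains; and it is genuinely necessary, since, for instance, a constant operation $x\opp{1}y:=e$ paired with a group operation $\opp{2}$ on a two-element set satisfies the interchange identity trivially while being distinct from $\opp{2}$.

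So assume $\opp{1},\opp{2}$ are commutative monoid operations on $\RR$, with identities $e_1$ and $e_2$ respectively, satisfying
\[
  (a\opp{1}b)\opp{2}(c\opp{1}d)\;=\;(a\opp{2}b)\opp{1}(c\opp{2}d)\qquad\text{for all }a,b,c,d\in\RR .
\]
The first step is to show $e_1=e_2$. I would substitute $a=c=e_1$ and $b=d=e_2$: since $e_1$ is an identity for $\opp{1}$, $e_1\opp{1}e_2=e_2$, so the left-hand side collapses to $e_2\opp{2}e_2=e_2$; since $e_2$ is an identity for $\opp{2}$, $e_1\opp{2}e_2=e_1$, so the right-hand side collapses to $e_1\opp{1}e_1=e_1$. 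Hence $e_1=e_2=:e$.

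The second step uses this common identity once more. Substituting $b=c=e$ and leaving $a,d$ free gives
\[
  (a\opp{1}e)\opp{2}(e\opp{1}d)\;=\;(a\opp{2}e)\opp{1}(e\opp{2}d),
\]
and because $e$ is an identity for both operations, the left side is $a\opp{2}d$ while the right side is $a\opp{1}d$. Therefore $a\opp{1}d=a\opp{2}d$ for all $a,d\in\RR$, \ie $\opp{1}=\opp{2}$, completing the forward direction.

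I do not expect any computational obstacle: both steps are single substitutions followed by applications of the identity laws. The one thing that must be handled carefully, and which I would flag as the crux, is the structural observation at the start -- reading the statement with identity elements present, verifying this costs nothing in the inference setting (where the four relevant operations have pairwise distinct identities, so that the first step already shows that any two distinct marginalization operations fail to commute), and then exploiting those identities in the two substitutions.
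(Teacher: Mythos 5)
The paper does not actually prove this statement: it is quoted from Eckmann and Hilton and used as a black box to conclude that the distributive law cannot be pushed through two distinct marginalization operations. Your argument is the classical Eckmann--Hilton interchange computation, and it is correct: the substitution $a=c=e_1$, $b=d=e_2$ collapses the left side to $e_2$ and the right side to $e_1$, forcing the identities to coincide, and the substitution $b=c=e$ then forces $a\opp{1}d=a\opp{2}d$ for all $a,d$. The more valuable part of your write-up is the point the paper glosses over: as displayed, the theorem is stated for commutative semigroups (and is even quantified only over $a,b,c$, leaving $d$ dangling), but the forward implication is false without identity elements. Your counterexample --- the constant operation $x\opp{1}y:=e$ paired with the two-element group whose identity is $e$ --- satisfies the interchange identity (both sides evaluate to $e$) while $\opp{1}\neq\opp{2}$, so the monoid hypothesis is genuinely needed. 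You also correctly observe that this costs nothing in the paper's setting, since the only operations admitted into the hierarchy ($\sumop$, $\prodop$, $\min$, $\max$) all carry identities on the extended rationals, and that the first substitution alone already yields the paper's intended conclusion there, because those four operations have pairwise distinct identities. In short: the paper offers a citation where you offer a proof, your proof is the standard one from the cited source, and it repairs an imprecision in the statement rather than introducing any gap.
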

\noindent which implies that \emph{direct application of distributive law to tractably and exactly solve any  inference problem with more than one marginalization operation is unfeasible, even for tree structures.} This limitation was previously known for marginal MAP inference~\cite{park2004complexity}.
\index{marginal MAP}

Min and max operations have an interesting property in this regard.
Similar to any other operations for min and max we have
\begin{align*}
  \min_{\xs_{\JJ}} \max_{\xs_{\II}} \fg(\xs_{\II \cup \JJ}) \neq \max_{\xs_{\II}} \min_{\xs_{\JJ}} \fg(\xs_{\II \cup \JJ})
\end{align*}

However,
\index{minimax theorem}
\index{game theory}
\index{graphical games}
\index{graphical models!game theory}
 if we slightly change the inference problem (from pure assignments $\xs_{\JJ_l}  \in \XX_{\JJ_l}$ to a distribution over assignments; \aka mixed strategies), as a result of the celebrated \emph{minimax theorem} \cite{von2007theory}, the min and max operations commute -- \ie
\begin{align*}
   \min_{\strat(\xs_{\JJ})} \max_{\strat(\xs_{\II})} \sum_{\xs_{\II \cup \JJ}} \strat(\xs_{\JJ}) \fg(\xs_{\II \cup \JJ}) \strat(\xs_{\II}) \quad = \quad \max_{\strat(\xs_{\II})} \min_{\strat(\xs_{\JJ})} \sum_{\xs_{\II \cup \JJ}}  \strat(\xs_{\II}) \fg(\xs_{\II \cup \JJ}) \strat(\xs_{\JJ_1})
\end{align*}
where $\strat(\xs_{\JJ_1})$ and $\strat(\xs_{\JJ_2})$ are mixed strategies.
This property has enabled addressing problems with min and max marginalization operations using message-passing-like procedures. For example, \cite{ibrahimi2011robust} solve this (mixed-strategy) variation of min-max-product inference. Message passing procedures that operate on graphical models for game theory  (\aka ``graphical games'')  also rely on this property~\cite{ortiz2002nash,kearns2007graphical}.

\section{Algebra of survey propagation}\label{sec:sp}
Survey propagation (SP) was first introduced as a message passing solution to satisfiability~\cite{braunstein_survey_2002} and was later generalized to general CSP
\cite{braunstein_constraint_2002} and arbitrary inference problems over factor-graphs~\cite{Mezard09}.
Several works offer different interpretations and generalizations of survey propagation \cite{Kroc2002,braunstein_survey_2003,maneva_new_2004}.
Here, we propose a generalization based the same notions that extends the application of BP to arbitrary commutative semirings.
Our derivation generalizes the variational approach of  
\cite{Mezard09}, in the same way that the algebraic approach to BP (using commutative semirings) generalizes the variational derivation of sum-product and min-sum BP.

As a fixed point iteration procedure,
if BP has more than one fixed points,
it may not converge at all. Alternatively, if the messages are initialized properly, BP may converge to one of its fixed points.
SP equations take ``all'' BP fixed points into account.
We view this task, of dealing with all fixed points, as using a third binary operation 
$\spplus$ on $\RR$. In particular, we require that $\otimes$ also distribute over $\spplus$, forming 
a second commutative semiring. We refer to this new semiring as a \magn{SP semiring}. 
To better explain the role of the third operation we need to introduce some notation.

Let $\msgss{\cdot}{\cdot}$ be a BP fixed point -- that is let
$$
\msgss{\cdot}{\cdot} = \{\, \msg{i}{\II} = \MSG{i}{\II}(\msgss{\nb i\back \II}{i}),
 \msg{\II}{i} = \MSG{\II}{i}(\msgss{\nb \II \back i}{\II})\, \mid \mid \forall i, \II \in \nb i \,\}
$$
and denote the set of all such fixed points by $\WW$.
Each BP fixed point corresponds to an approximation to the integral $\qq(\emptyset)$,
which we denote by $\QQ(\msgss{\cdot}{\cdot})(\emptyset)$ -- using this functional form
to emphasize the dependence of this approximation on BP messages.
Recall that in the original problem, $\XX$ is the domain of assignments, 
$\qq(\xs)$ is the  expanded form and $\bpplus$-marginalization is (approximately) performed by BP.
In the case of survey propagation, $\WW$ is the domain of assignments and the integral $\QQ(\msgss{\cdot}{\cdot})(\emptyset)$ 
evaluates a particular assignment $\msgss{\cdot}{\cdot}$ to all the messages -- \ie 
$\QQ(\msgss{\cdot}{\cdot})(\emptyset)$  is the new expanded form.

In this algebraic perspective, SP efficiently performs a second integral  using $\spplus$ over all fixed points:
\begin{align}\label{eq:sp_q_intmarg}
\QQ(\emptyset)(\emptyset) = \bigspplus_{\msgss{\cdot}{\cdot} \in \WW} \QQ(\msgssq{\cdot}{\cdot})(\emptyset)
\end{align}
\RefTable{table:sp} summarizes this correspondence. It is not immediately obvious how this second integration can be usefull in practice.
The following examples attempts to motivate SP before we explain its derivation.

\begin{example}
If the $\otimes$ operator of the semiring has an inverse, any BP fixed point $\msgss{\cdot}{\cdot}$ represents a joint form $\ph(\xs)$. We can explicitly write this distribution
based on BP marginals (ergo BP messages):
\begin{align}\label{eq:semiring_reparam}
  \ph(\xs) \quad = \quad \frac{\bigbptimes_\II \ph(\xs_\II)}{\bigbptimes_i \big ( \ph(\xx_i) \powerop ({ \vert \nb i \vert - 1 }) \big )} \quad = \quad \PH(\msgss{\cdot}{\cdot})(\xs) 
\end{align}
where the inverse is w.r.t $\otimes$ and the \textit{exponentiation operator} is defined as $a \powerop b \defeq \underbrace{a \otimes \ldots \otimes a}_{b\; \text{times}}$. Our notation using $\PH(\msgss{\cdot}{\cdot})(\xs)$ is to explicitly show the dependence of this joint form on BP messages.
To see why this is correct, we use the exactness of BP on trees and 
 substitute BP marginals \refEqs{eq:marg_semiring_norm}{eq:marg_semiring_factor_norm}
into \refEq{eq:semiring_reparam}:
\begin{align*}
  &\frac{\bigbptimes_\II \ph(\xs_\II)}{\bigbptimes_i \big ( \ph(\xx_i) \powerop ({ \vert \nb i \vert - 1 }) \big )} & \quad = \quad 
  &\frac{\bigbptimes_\II  \ff_{\II}(\xs_{\II}) \bigotimes_{i \in \nb \II} \msg{i}{\II}(\xx_i)}{\bigbptimes_i 
\big ( \bigotimes_{\II \in \nb i} \msg{\II}{i}(\xx_i) \powerop ({ \vert \nb i \vert - 1 }) \big )} & = \\
&\frac{\bigbptimes_\II  \ff_{\II}(\xs_{\II}) \bigotimes_{i \in \nb \II} \msg{i}{\II}(\xx_i)}{\bigbptimes_i 
\big ( \bigotimes_{\II \in \nb i} \msg{i}{\II}(\xx_i) \big )} & \quad = \quad
&\bigbptimes_\II  \ff_{\II}(\xs_{\II}) \quad = \quad  \pp(\xs)
\end{align*}

In some interesting settings for sum-product inference, BP has many ``disjoint'' fixed points, in the sense that $\ph(\xs)$ estimates (as defined above) have small overlap with each other (\eg citep{Mezard1987}) -- \ie $\pp(\xs) \approx \sum_{\msgss{\cdot}{\cdot} \in \WW} \PH(\msgss{\cdot}{\cdot})(\xs)$. In this setting the BP estimate of the integral (and marginals) is inaccurate. However, summing over all such BP integrals gives a more accurate estimate of $\qq(\emptyset)$. This is an application of SP where $\bpplus = \spplus = +$ in \refEq{eq:sp_q_intmarg}. 
\end{example}

\begin{table}
\tbl{The correspondance between BP and SP}{
\begin{tabu}{ r|[2pt]l}
\textbf{Belief Propagation} & \textbf{Survey Propagation} \\
\multicolumn{2}{l}{domain:}\\
$\xs$ & $\msgss{\cdot}{\cdot}$ \\ 
$\forall i \quad \xx_i$ & $\msg{i}{\II}\;,\; \msg{\II}{i} \quad \forall i, \II \in \nb i$  \\ 
$\XX$ & $\WW$ \\
\multicolumn{2}{l}{expanded form:}\\
$\qq(\xs)$ & $\QQ(\msgss{\cdot}{\cdot})(\emptyset)$\\
\multicolumn{2}{l}{integration:}\\
 $\qq(\emptyset) = \bigbpplus_{\xs} \qq(\xs)$ & $\QQ(\emptyset)(\emptyset) = \bigspplus_{\msgss{\cdot}{\cdot}} \QQ(\msgss{\cdot}{\cdot})(\emptyset)$\\
\multicolumn{2}{l}{marginalization:}\\
 $\pp(\xx_i) \propto \bigbpplus_{\xs \back i} \pp(\xs)$ & $\PSP(\msg{\II}{i}) \propto \bigspplus_{\back \msgss{\II}{i}} \PP(\msgss{\cdot}{\cdot})$\\
\multicolumn{2}{l}{factors:}\\
$ \forall \II \quad \ff_\II(\xs_\II) $ & $\PHT_\II(\msgss{\nb \II}{\II})(\emptyset)$, $\PHT_i(\msgss{\nb i \back \II}{i})(\emptyset)$ and $\MSGBIT{i}{\II}(\msg{i}{\II},\msg{\II}{i})(\emptyset)^{-1} \quad \forall i, \II \in \nb i$
\end{tabu}
}
\label{table:sp}
\end{table}


Our derivation of SP requires $(\RR, \bptimes)$ to be an Abelian group (\ie every element of $\RR$ has an inverse w.r.t.~$\bptimes$).
We require $\bptimes$ to be invertable as we need to work with normalized BP
and SP messages.
In \refSection{sec:uniformsp} we introduce another variation of SP that simply counts the 
BP fixed points and relaxes this requirement.


\subsection{Decomposition of the integral}
\label{sec:decompose_integral}
In writing the normalized BP equations in \refSection{sec:bp}, we hid the normalization constant using $\propto$ sign.
Here we explicitly define the normalization constants or \magn{local integrals} by defining unnormalized messages, based on their normalized version
\begin{align}
\msgt{\II}{i}(\xx_i) \quad &\defeq \quad   \bigoplus_{ \xs_{\back i}} \ff_{{\II}}(\xs_{\II}) 
\bigotimes_{j \in \nb \II \back i} \msg{j}{{\II}}(\xx_{j}) \quad &\defeq \quad \MSGT{\II}{i}(\msgss{\nb \II \back i}{\II})(\xx_i) \label{eq:mIi_partition}\\
\msgt{i}{\II}(\xx_i)  \quad &\defeq \quad   \bigotimes_{\JJ \in \nb{i} \back \II} \msg{{\JJ}}{i}(\xx_i)  \quad &\defeq \quad  \MSGT{i}{\II}(\msgss{\nb i \back \II}{i})(\xx_i)\label{eq:miI_partition}\\
\pht_\II(\xs_\II)  \quad &\defeq \quad   \ff_{\II}(\xs_{\II}) \bigotimes_{i \in \nb \II} \msgss{i}{\II}(\xx_i)   \quad &\defeq \quad \PHT_\II(\msgss{\nb \II}{\II})(\xs_\II) \label{eq:marg_factor_partition}\\
\pht_i(\xx_i) \quad  &\defeq \quad  \bigotimes_{\II \in \nb i} \msg{\II}{i}(\xx_i)  \quad &\defeq \quad  \PHT_i(\msgss{\nb i}{i})(\xx_i)  \label{eq:marg_partition}
\end{align}
where each update also has a functional form on the r.h.s.
In each case, the local integrals are simply the integral of unnormalized messages or marginals -- \eg $\msgt{\II}{i}(\emptyset) = \bigoplus_{\xx_i} \msgt{\II}{i}(\xx_i)$.

Define the functional $\MSGBIT{i}{\II}(\msg{i}{\II},\msg{\II}{i})$ as the product of messages from $i$ to $\II$ and vice versa
\begin{align} \label{eq:bi_direction}
  \msgbit{i}{\II}(\xx_i) \quad \defeq \quad  \msg{i}{\II}(\xx_i) \otimes \msg{\II}{i}(\xx_i) \quad \defeq \quad \MSGBIT{i}{\II}(\msg{i}{\II},\msg{\II}{i})(\xx_i)
\end{align}


\begin{theorem}\label{th:integral_decompose} 
If the factor-graph has no loops and $(\RR, \otimes)$ is an Abelian group, the global integral decomposes to local BP integrals as
  \begin{align}\label{eq:partition_decomp_nofunc}
    \qq(\emptyset) \quad = \quad \bigotimes_{\II} \pht_\II(\emptyset) \;\bigotimes_i\pht_i(\emptyset)
\left (\bigotimes_{i, \II \in \nb i} \msgbit{i}{\II}(\emptyset) \right )^{-1}  
  \end{align}
or in other words $\qq(\emptyset)  =  \QQ(\msgss{\cdot}{\cdot})(\emptyset)$ where
  \begin{align}\label{eq:partition_decomp}
    \QQ(\msgss{\cdot}{\cdot})(\emptyset) = \bigotimes_{\II} \PHT_\II(\msgss{\nb \II}{\II})(\emptyset) \;\bigotimes_i\PHT_i(\msgss{\nb i}{i})(\emptyset)
 \left ( \bigotimes_{i, \II \in \nb i} \MSGBIT{i}{\II}(\msg{i}{\II},\msg{\II}{i})(\emptyset) \right)^{-1} 
  \end{align}
\end{theorem}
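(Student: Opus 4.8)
The plan is to proceed by induction on the structure of the (loop-free) factor-graph, peeling off one leaf at a time and showing that the ratio $\qq(\emptyset) / \QQ(\msgss{\cdot}{\cdot})(\emptyset)$ is invariant under this operation, with base case a single factor or a single variable. More concretely, I would first rewrite the claimed identity \refEq{eq:partition_decomp_nofunc} purely in terms of \emph{unnormalized} messages by substituting the definitions \refEqs{eq:mIi_partition}{eq:marg_partition} and \refEq{eq:bi_direction}. Since $(\RR,\otimes)$ is an Abelian group, every factor on the right-hand side has an inverse, so the identity is equivalent (after multiplying through by the denominator) to a statement of the form ``$\qq(\emptyset) \otimes \bigotimes_{i,\II \in \nb i} \msgbit{i}{\II}(\emptyset) = \bigotimes_\II \pht_\II(\emptyset) \bigotimes_i \pht_i(\emptyset)$'', which is the cleaner thing to prove by induction. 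The key algebraic engine is that $\otimes$ distributes over $\oplus$ (the semiring property), exactly as used in the derivation of the BP equations \refEqs{eq:miI_semiring}{eq:mIi_semiring} — this is what lets me commute the global $\bigoplus_{\xs}$ past the $\bigotimes_\II$ and recognize partial sums as messages.

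For the inductive step I would pick a leaf of the tree — say a variable node $\xx_i$ of degree one, attached to a single factor $\II$ — and separate $\qq(\emptyset) = \bigoplus_{\xs}\bigotimes_{\JJ}\ff_\JJ(\xs_\JJ)$ into the part depending on $\xx_i$ and the part not depending on it. Using distributivity, $\bigoplus_{\xx_i}$ can be pushed inside so that $\ff_\II$'s contribution collapses to $\msgt{\II}{i'}$ for the other neighbour $i'$ of $\II$ (or, if $\II$ has only the neighbour $i$, to the scalar $\msgt{\II}{i}(\emptyset) = \pht_\II(\emptyset)$). The resulting expression is the integral of a strictly smaller factor-graph in which node $i$ (and possibly $\II$) has been deleted, to which the inductive hypothesis applies. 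I then have to check that the bookkeeping factors match: the $\msgbit{i}{\II}(\emptyset)$ term newly appearing on the left and the $\pht_i(\emptyset) = \msg{\II}{i}(\emptyset)$ and $\pht_\II$ terms on the right account exactly for the difference between the full graph and the reduced graph. This is a direct-but-fiddly matching of normalization constants, using the normalization conventions $\propto$ of \refEqs{eq:mIi_semiring_norm}{eq:marg_semiring_norm} that relate $\pp,\ph$ to $\pht,\msgt$; I would handle leaf-variable and leaf-factor cases separately, and treat as a base case the graph consisting of one factor with one incident variable.

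The main obstacle I anticipate is not the distributive-law manipulation itself but the exponent bookkeeping in the ``overcounting correction'' — the term $\big(\bigotimes_{i,\II\in\nb i}\msgbit{i}{\II}(\emptyset)\big)^{-1}$. Each edge $(i,\II)$ of the tree is counted once in $\bigotimes_\II \pht_\II$ (through $\msg{i}{\II}$) and once in $\bigotimes_i \pht_i$ (through $\msg{\II}{i}$), so the product $\bigotimes_\II\pht_\II \otimes \bigotimes_i\pht_i$ double-counts exactly one $\msgbit{i}{\II}(\emptyset)$ per edge; getting this to line up requires carefully separating, for each $i$, the $|\nb i|$ incoming messages among the $\pht_i$ and $\pht_\II$ terms, and it is the place where the tree hypothesis (no loops, so $|E| = |V|-1$ per connected component, and messages factor cleanly over disjoint subtrees) is genuinely used. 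An alternative to the leaf-peeling induction, which I would keep in reserve, is to substitute the reparametrization identity \refEq{eq:semiring_reparam} — namely $\ph(\xs) = \bigbptimes_\II\ph(\xs_\II) \,/\, \bigbptimes_i \ph(\xx_i)^{\powerop(|\nb i|-1)}$, already proved in the excerpt for trees — and simply integrate both sides with $\bigoplus_\xs$, reorganizing the normalization constants; this trades the combinatorial induction for a global algebraic computation but needs the same exponent bookkeeping to convert $\ph$'s back into $\pht$'s and $\msgt$'s.
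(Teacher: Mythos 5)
Your proposal is correct and is in substance the paper's own argument: both are structural inductions over the tree that track how the local normalization constants (the message integrals $\msgt{i}{\II}(\emptyset)$, $\msgt{\II}{i}(\emptyset)$) compose via the distributive law, and your per-edge double-counting observation is exactly the content of \RefProposition{th:simplify_sp}, which rewrites $\pht_\II(\emptyset) \otimes \msgbit{i}{\II}(\emptyset)^{-1}$ and $\pht_i(\emptyset) \otimes \msgbit{i}{\II}(\emptyset)^{-1}$ as single directed-message integrals. The only organizational difference is the direction of the induction: the paper roots the tree at a degree-one variable, identifies $\qq(\emptyset)$ with the integral of the unique message entering the root, and unrolls the recursion $\msgq{i}{\II}(\emptyset) = \msgt{i}{\II}(\emptyset)\,\bigotimes_{\JJ\in\nb i\back\II}\msgq{\JJ}{i}(\emptyset)$ into a product of local integrals over edges oriented toward the root, whereas you peel leaves and match constants on the reduced graph --- the same computation run in the opposite direction.
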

\begin{proof} 
For this proof we build a tree around an  root node $r$
that is connected to one factor. (Since the factor-graph is a tree such a node always exists.) 
Send BP messages from the leaves, up towards the root $r$ and back to the leaves. 
Here, any message $\msgq{i}{\II}(\xx_i)$,
can give us the integral for the sub-tree that contains all the nodes and factors up to node $i$ using $\msgq{i}{\II}(\emptyset) = \bigbpplus_{\xx_i}\msgq{i}{\II}(\xx_i)$.
Noting that the root is connected to exactly one factor, the global integral is 
\begin{align}\label{eq:genesis}
\bigbpplus_{\xx_r} \qq(\xx_r) = \bigbpplus_{\xx_r} \bigbptimes_{\II \in \nb r} \msgq{\II}{r}(\xx_r) = 
\msgq{\II}{r}(\emptyset)
\end{align}

On the other hand, we have the following relation between $\msgq{i}{\II}$ and $\msg{i}{\II}$ (also corresponding factor-to-variable message)
\begin{align}
  \msgq{i}{\II}(\xx_i) &= \msg{i}{\II}(\xx_i) \bptimes \msgq{i}{\II}(\emptyset) \quad \forall i, \II \in \nb i \label{eq:integral_subs1}\\
  \msgq{\II}{i}(\xx_i) &= \msg{\II}{i}(\xx_i) \bptimes \msgq{\II}{i}(\emptyset) \quad \forall i, \II \in \nb i \label{eq:integral_subs2}
\end{align}

Substituting this into BP \refEqs{eq:miI_semiring}{eq:mIi_semiring} we get
\begin{align}
\msgq{i}{{{\II}}}(\xx_i) \quad & = \quad  \bigotimes_{\JJ \in \nb{i} \back \II} \msgq{\JJ}{i}(\emptyset) \msg{{\JJ}}{i}(\xx_i) \label{eq:step1_1}\\
\msgq{{{\II}}}{i}(\xx_i) \quad & = \quad  \bigoplus_{ \xs_{\back i}} \ff_{{\II}}(\xs_{\II}) 
\bigotimes_{j \in \nb \II \back i} \msgq{j}{{\II}}(\emptyset) \msg{j}{{\II}}(\xx_{j})  
\label{eq:step1_2}
\end{align}
By summing over both l.h.s and r.h.s in equations above and substituting from \refEq{eq:miI_partition}, we get
\begin{align}
\bigbpplus_{\xx_i} \msgq{i}{{{\II}}}(\xx_i) \quad & = \quad  \left ( \bigotimes_{\JJ \in \nb{i} \back \II} \msgq{\JJ}{i}(\emptyset) \right ) \otimes \left (\bigbpplus_{\xx_i} \bigotimes_{\JJ \in \nb{i} \back \II}  \msg{{\JJ}}{i}(\xx_i) \right ) \Rightarrow \notag \\
\msgq{i}{{{\II}}}(\emptyset) \quad & = \quad  \msgt{i}{\II}(\emptyset)\; \bigotimes_{\JJ \in \nb{i} \back \II} \msgq{\JJ}{i}(\emptyset) \label{eq:recurse_1}
\end{align}
and similarly for \refEq{eq:step1_2} using integration and substitution from \refEq{eq:mIi_partition} we have
\begin{align}
\bigbpplus_{\xx_i} \msgq{{{\II}}}{i}(\xx_i) \quad & = \quad  \left ( \bigotimes_{j \in \nb \II \back i} \msgq{j}{{\II}}(\emptyset) \right) \otimes \left( \bigoplus_{ \xs_{\II}} \ff_{{\II}}(\xs_{\II}) 
\bigotimes_{j \in \nb \II \back i} \msg{j}{{\II}}(\xx_{j})  \right ) \Rightarrow \notag \\
\msgq{{{\II}}}{i}(\emptyset) \quad & = \quad   \msgt{\II}{i}(\emptyset)
\bigotimes_{j \in \nb \II \back i} \msgq{j}{{\II}}(\emptyset)   \label{eq:recurse_2}
\end{align}

\RefEqs{eq:recurse_1}{eq:recurse_2} are simply recursive integration on a tree, where the integral
up to node $i$ (\ie $\msgq{i}{{{\II}}}(\emptyset)$ in \refEq{eq:recurse_1}) is reduced to the integral over its sub-trees.
By unrolling this recursion we see that $\msgq{i}{{{\II}}}(\emptyset)$ is simply the product of all $\msgt{\II}{i}(\emptyset)$ and $\msgt{\II}{i}(\emptyset)$
in its sub-tree, where the messages are towards the root. \RefEq{eq:genesis}
tells us that the global integral is not different.
Therefore, \refEqs{eq:recurse_1}{eq:recurse_2} we can completely expand the recursion
for the global integral.
For this, let $\uparrow i$
restrict the $\nb i$ to the factor that is higher than variable $i$ in the tree (\ie closer to the root $r$). Similarly let $\uparrow \II$ be the variable that is closer to the root than $\II$.
We can write the global integral as
 \begin{align}\label{eq:partition_by_message_partition}
   \qq(\emptyset) = \bigbptimes_{i, \II = \uparrow i} \msgt{i}{\II}(\emptyset) \bigbptimes_{\II, i = \uparrow \II} \msgt{\II}{i}(\emptyset)
 \end{align}

\RefProposition{th:simplify_sp} shows that these local integrals can be written in terms of local integrals of interest -- \ie
  \begin{align*}
    \msgt{\II}{i}(\emptyset) \quad = \quad \frac{\pht_\II(\emptyset)}{\msgbit{i}{\II}(\emptyset)} \quad \text{and} \quad
    \msgt{i}{\II}(\emptyset) \quad = \quad \frac{\pht_i(\emptyset)}{\msgbit{i}{\II}(\emptyset)}
  \end{align*}
Substituting from the equations above into \refEq{eq:partition_by_message_partition} we get
the equations of \refTheorem{th:integral_decompose}. 
\end{proof}




\subsection{The new factor-graph and semiring}\label{sec:newsemiring}


The decomposition of integral in \refTheorem{th:integral_decompose} means $\QQ(\msgss{\cdot}{\cdot})(\emptyset)$
has a factored form. Therefore, a factor-graph with 
$\msgss{\cdot}{\cdot}$ as the set of variables and 
three different types of factors corresponding to different
terms in the decomposition -- \ie $\PHT_\II(\msgss{\nb \II}{\II})(\emptyset)$, $\PHT_i(\msgss{\nb i \back \II}{i})(\emptyset)$ and $\MSGBIT{i}{\II}(\msg{i}{\II},\msg{\II}{i})(\emptyset)^{-1}$ -- can represent $\QQ(\msgss{\cdot}{\cdot})(\emptyset)$. 


\RefFigure{fig:spfg} shows a simple factor-graph 
and the corresponding SP factor-graph. The new factor-graph has one variable per each message in the
original factor-graph and three types of factors as discussed above.
Survey propagation is simply belief propagation applied to the  this new factor-graph using
the new semiring.
As before, BP messages are exchanged between variables and factors.
But here, we can simplify BP messages by substitution and only keep two types of factor-to-factor messages. 
We use $\msp{i}{\II}$ and $\msp{\II}{i}$ to denote these two types of SP messages.
These messages are exchanged between two types of factors, namely $\PHT_\II(\msgss{\nb \II}{\II})(\emptyset)$ and $\PHT_i(\msgss{\nb i \back \II}{i})(\emptyset)$.
Since the third type of factors $\MSGBIT{i}{\II}(\msg{i}{\II},\msg{\II}{i})(\emptyset)^{-1}$ 
are always connected to only two variables, $\msg{i}{\II}$ and $\msg{\II}{i}$, 
we can simplify their role in the SP message update to get 
\begin{align}
  \msp{i}{\II}(\msg{i}{\II}, \msg{\II}{i}) \; &\propto \; \bigspplus_{\back \msg{i}{\II},\msg{\II}{i}} \left ( \frac{\PHT_i(\msgss{\nb i}{i})(\emptyset)}{\MSGBIT{i}{\II}(\msg{i}{\II},\msg{\II}{i})(\emptyset)} \;
 \bigbptimes_{\JJ \in \nb i \back \II} \msp{\JJ}{i}(\msg{i}{\JJ},\msg{\JJ}{i}) \right ) \label{eq:spiI_first}\\
  \msp{\II}{i}(\msg{i}{\II}, \msg{\II}{i}) \; &\propto \; \bigspplus_{\back \msg{i}{\II},\msg{\II}{i} } \left (\frac{\PHT_\II(\msgss{\nb \II}{\II})(\emptyset)}{\MSGBIT{i}{\II}(\msg{i}{\II},\msg{\II}{i})(\emptyset)} \;
 \bigbptimes_{j \in \nb \II \back i} \msp{j}{\II}(\msg{j}{\II},\msg{\II}{j}) \right ) \label{eq:spIi_first}
\end{align}
where in all cases we are assuming the messages $\msgss{\cdot}{\cdot} \in \WW$ are consistent with each other -- \ie satisfy BP equations on the original factor-graph.
Note that, here again we are using the normalized BP message update and the normalization
factor is hidden using $\propto$ sign. This is possible because we assumed $\bptimes$ has an inverse.  
We can further simplify this update using the following proposition.

\begin{figure}
\centering
\includegraphics[width=1\textwidth]{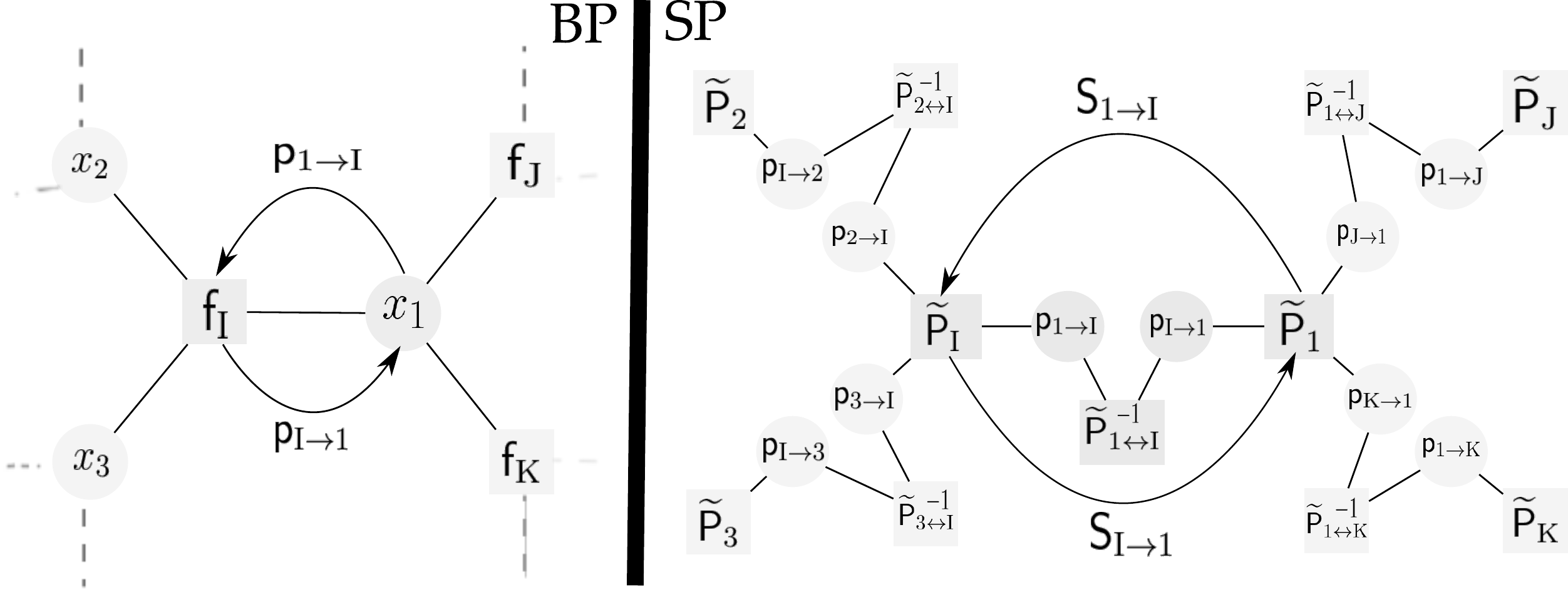}
\caption[Survey propagation factor-graph.]{Part of a factor-graph (left) and the corresponding SP factor-graph on the right.
The variables in SP factor-graph are the messages in the original graph. The SP factor-graph
has three type of factors: (I)~$\PHT_\II(.)(\emptyset)$, (II)~$\PHT_i(.)(\emptyset)$ and (III)~${\MSGBIT{i}{\II}(.)(\emptyset)}^{-1}$. 
As the arrows suggest, SP message updates
are simplified so that only two type of messages are exchanged: $\msp{i}{\II}$ and $\msp{\II}{i}$ between factors of type (I) and (II).
}
\label{fig:spfg}
\end{figure}

\begin{proposition} \label{th:simplify_sp} For $\msgss{\cdot}{\cdot} \in \WW$
  \begin{align} 
    \frac{\PHT_i(\msgss{\nb i}{i})(\emptyset)}{\MSGBIT{i}{\II}(\msg{\II}{i}, \msg{i}{\II})(\emptyset)} \; &= \; \MSGT{i}{\II}(\msgss{\nb i \back \II}{i})(\emptyset) \\
\quad &\text{and} \quad \notag \\
    \frac{\PHT_\II(\msgss{\nb \II}{\II})(\emptyset)}{\MSGBIT{i}{\II}(\msg{\II}{i}, \msg{i}{\II})(\emptyset)} \; &= \;  \MSGT{\II}{i}(\msgss{\nb \II \back i}{\II})(\emptyset)
  \end{align}
\end{proposition}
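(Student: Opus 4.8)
The plan is to obtain both equalities from a single short manipulation, applied once in the ``variable'' direction and once in the ``factor'' direction: expand the unnormalized marginal, use the BP fixed-point equations to peel off the one edge $(i,\II)$, then $\oplus$-integrate using the distributive law. The hypothesis $\msgss{\cdot}{\cdot}\in\WW$ will enter \emph{only} through the normalized BP updates \refEqs{eq:mIi_semiring_norm}{eq:miI_semiring_norm}: comparing them with the unnormalized definitions \refEqs{eq:mIi_partition}{eq:miI_partition} shows that each normalized message equals its own unnormalized version divided by the corresponding local integral, i.e. $\msgt{i}{\II}(\xx_i)=\msg{i}{\II}(\xx_i)\otimes\msgt{i}{\II}(\emptyset)$ and $\msgt{\II}{i}(\xx_i)=\msg{\II}{i}(\xx_i)\otimes\msgt{\II}{i}(\emptyset)$. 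Invertibility of $\otimes$, as assumed in \refTheorem{th:integral_decompose}, is what makes these quotients and the final division legitimate.

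For the first equality I would start from $\pht_i(\xx_i)=\bigotimes_{\JJ\in\nb i}\msg{\JJ}{i}(\xx_i)$ of \refEq{eq:marg_partition}, split off $\msg{\II}{i}(\xx_i)$, and recognize the remaining product over $\nb i\back\II$ as $\msgt{i}{\II}(\xx_i)$ by \refEq{eq:miI_partition}. Rewriting $\msgt{i}{\II}(\xx_i)$ as $\msg{i}{\II}(\xx_i)\otimes\msgt{i}{\II}(\emptyset)$ and invoking \refEq{eq:bi_direction} gives $\pht_i(\xx_i)=\msgbit{i}{\II}(\xx_i)\otimes\msgt{i}{\II}(\emptyset)$. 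Applying $\bigoplus_{\xx_i}$ and using distributivity to pull the constant $\msgt{i}{\II}(\emptyset)$ outside the sum yields $\pht_i(\emptyset)=\msgbit{i}{\II}(\emptyset)\otimes\msgt{i}{\II}(\emptyset)$; dividing by $\msgbit{i}{\II}(\emptyset)$ and passing to functional notation is exactly the first claim.

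For the second equality I would run the mirror argument. Starting from $\pht_\II(\xs_\II)=\ff_\II(\xs_\II)\otimes\bigotimes_{j\in\nb\II}\msg{j}{\II}(\xx_j)$ of \refEq{eq:marg_factor_partition}, I peel off $\msg{i}{\II}(\xx_i)$, write $\bigoplus_{\xs_\II}=\bigoplus_{\xx_i}\bigoplus_{\xs_{\nb\II\back i}}$, move $\msg{i}{\II}(\xx_i)$ past the inner sum by distributivity, and identify the inner sum with $\msgt{\II}{i}(\xx_i)$ via \refEq{eq:mIi_partition}, so that $\pht_\II(\emptyset)=\bigoplus_{\xx_i}\msg{i}{\II}(\xx_i)\otimes\msgt{\II}{i}(\xx_i)$. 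Substituting $\msgt{\II}{i}(\xx_i)=\msg{\II}{i}(\xx_i)\otimes\msgt{\II}{i}(\emptyset)$, pulling the constant $\msgt{\II}{i}(\emptyset)$ out of $\bigoplus_{\xx_i}$, and using \refEq{eq:bi_direction} gives $\pht_\II(\emptyset)=\msgbit{i}{\II}(\emptyset)\otimes\msgt{\II}{i}(\emptyset)$, whence the second claim.

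The only genuinely delicate step is the bookkeeping in the second part: one must justify the splitting $\bigoplus_{\xs_\II}=\bigoplus_{\xx_i}\bigoplus_{\xs_{\nb\II\back i}}$ (legitimate because the marginalization operation is associative and commutative) and verify that the lone factor $\msg{i}{\II}(\xx_i)$, being constant in the variables $\xs_{\nb\II\back i}$, really does commute past that inner $\oplus$ via the distributive law. Everything else is a mechanical substitution of the normalized BP equations; in particular no loop-freeness of the original factor-graph is required, since both statements are purely local to the edge $(i,\II)$.
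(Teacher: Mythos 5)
Your proof is correct and follows essentially the same route as the paper's: in each case you peel the edge message off the local marginal ($\pht_i(\xx_i)=\msg{\II}{i}(\xx_i)\otimes\msgt{i}{\II}(\xx_i)$ and $\pht_\II(\xx_i)=\msg{i}{\II}(\xx_i)\otimes\msgt{\II}{i}(\xx_i)$), use the fixed-point relation $\msgt{\cdot}{\cdot}(\xx_i)=\msg{\cdot}{\cdot}(\xx_i)\otimes\msgt{\cdot}{\cdot}(\emptyset)$, pull the constant integral out of the $\oplus$-sum by distributivity to obtain $\pht_{\cdot}(\emptyset)=\msgt{\cdot}{\cdot}(\emptyset)\otimes\msgbit{i}{\II}(\emptyset)$, and divide. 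The differences (the order of the two identities, substituting before versus after integrating, and your more explicit handling of $\bigoplus_{\xs_\II}$ in the factor case) are cosmetic; your added observations that invertibility of $\otimes$ and membership in $\WW$ are exactly what is used, and that loop-freeness is not needed, are accurate.
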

\begin{proof}
 By definition of $\pht_\II(\xs_\II)$ and $\msg{i}{\II}(\xx_i)$ in \refEqs{eq:mIi_partition}{eq:marg_factor_partition}
\begin{align*}
  \pht_\II(\xx_i) =  \msgt{\II}{i}(\xx_i) \bptimes \msg{i}{\II}(\xx_i) \quad &\Rightarrow \quad \bigbpplus_{\xx_i}\pht_\II(\xx_i) = \bigbpplus_{\xx_i} \msgt{\II}{i}(\xx_i) \bptimes \msg{i}{\II}(\xx_i) 
&\Rightarrow\\
\pht_{\II}(\emptyset) = \msgt{\II}{i}(\emptyset) \bptimes \left (\bigbpplus_{\xx_i} \msg{\II}{i}(\xx_i) \bptimes \msg{i}{\II}(\xx_i) \right ) \quad &\Rightarrow \quad \pht_{\II}(\emptyset) = \msgt{\II}{i}(\emptyset) \bptimes \msgbit{i}{\II}(\emptyset)
\end{align*}
where in the last step we used \refEq{eq:bi_direction}.

Similarly for the second statement of the proposition we have
\begin{align*}
  \pht_i(\xx_i) =  \msgt{i}{\II}(\xx_i) \bptimes \msg{\II}{i}(\xx_i) \quad &\Rightarrow \quad \bigbpplus_{\xx_i}\pht_i(\xx_i) = \bigbpplus_{\xx_i} \msgt{i}{\II}(\xx_i) \bptimes \msg{\II}{i}(\xx_i) 
&\Rightarrow\\
\pht_{i}(\emptyset) = \msgt{i}{\II}(\emptyset) \bptimes \big (\bigbpplus_{\xx_i} \msg{\II}{i}(\xx_i) \bptimes \msg{i}{\II}(\xx_i) \big ) \quad &\Rightarrow \quad \pht_{i}(\emptyset) = \msgt{i}{\II}(\emptyset) \bptimes \msgbit{i}{\II}(\emptyset)
\end{align*}
\end{proof}

The term on the l.h.s. in the proposition above appear in \refEqs{eq:spiI_first}{eq:spIi_first} and the terms on the r.h.s.~are local message integrals given by \refEqs{eq:mIi_partition}{eq:miI_partition}.
We can enforce $\msgss{\cdot}{\cdot} \in \WW$, by enforcing BP updates 
$\msg{i}{\II} = \MSG{i}{\II}(\msgss{\nb i \back \II}{i})$ and 
$\msg{\II}{i} = \MSG{\II}{i}(\msgss{\nb \II \back }{\II})$ ``locally'', 
during the message updates in the new factor-graph.
Combining this constraint with the simplification offered by
\refProposition{th:simplify_sp} gives us the SP message updates 
{\small
\begin{align}
  \msp{i}{\II}(\msg{i}{\II}) &\propto  \underset{\msgss{\nb i \back \II}{i}}{\bigspplus} \left (  
\ident\big ( \msg{i}{\II} = \MSG{i}{\II}(\msgss{\nb i \back \II}{i}) \big) \bptimes 
\MSGT{i}{\II}(\msgss{\nb i \back \II}{i})(\emptyset) \;  
 \bigbptimes_{\JJ \in \nb i \back \II} \msp{\JJ}{i}(\msg{\JJ}{i}) \right )\label{eq:spiI_semiring}\\
  \msp{\II}{i}(\msg{\II}{i})  &\propto  \underset{\msgss{\nb \II \back i}{\II}}{\bigspplus} \left ( 
\ident\big ( \msg{\II}{i} = \MSG{\II}{i}(\msgss{\nb \II \back i}{\II}) \big) \bptimes 
\MSGT{\II}{i}(\msgss{\nb \II \back i}{\II})(\emptyset) \;
 \bigbptimes_{j \in \nb \II \back i} \msp{j}{\II}(\msg{j}{\II}) \right ) \label{eq:spIi_semiring}
\end{align}
}%
where $\ident(.)$ is the identity function on the SP semiring, where $\ident(\truemath) = \identt{\bptimes}$ and
$\ident(\falsemath) = \identt{\spplus}$.

Here each SP message is a functional over all possible BP messages between the same variable and factor. However, in updating the SP messages, the identity functions
ensure that only the messages that locally satisfy BP equations are taken into account.
Another difference from the updates of \refEqs{eq:spiI_first}{eq:spIi_first} is that SP messages
have a single argument. This is because the new local integrals either depend on
$\msg{i}{\II}$ or $\msg{\II}{i}$, and not both.

\begin{example} In variational approach, survey propagation comes in two variations: entropic $\mathrm{SP}(\xi)$ and energetic $\mathrm{SP}(\mathbf{y})$ \cite{Mezard09}. For the readers familiar
with variational derivation of SP, here we express the relation to the algebraic approach.
According to the variational view, the partition function of the \textit{entropic SP}  is  $\sum_{\msgss{\cdot}{\cdot}} e^{\xi \log(\QQ(\msgss{\cdot}{\cdot})(\emptyset))}$, where $\QQ(\msgss{\cdot}{\cdot})(\emptyset)$ is the partition function for the sum-product semiring.
The entropic SP has an inverse temperature parameter, \aka \textit{Parisi parameter}, $\xi \in \Re$. It is easy to see that $\xi = 1$ corresponds to $\spplus = +, \bpplus=+$ and $\bptimes = \times$ in our algebraic approach. 
The limits of $\xi \to \infty$ corresponds to $\spplus = \max$. On the other hand, the limit of
$\xi \to 0$ amounts to ignoring $\QQ(\msgss{\cdot}{\cdot})(\emptyset)$ and corresponds to the counting SP; see \refSection{sec:uniformsp}.

The \textit{energetic $\mathrm{SP}(\mathbf{y})$} is different only in the sense that 
$\QQ(\msgss{\cdot}{\cdot})(\emptyset)$ in $\sum_{\msgss{\cdot}{\cdot}} e^{-\mathbf{y} \log(\QQ(\msgss{\cdot}{\cdot})(\emptyset))}$ is the ground state energy.
This corresponds to $\spplus = +, \bpplus=\max$ and $\bptimes = \sum$, and the limits of
the inverse temperature parameter $\mathbf{y} \to \infty$ is equivalent to $\spplus = \min, \bpplus=\min$ and $\bptimes = \sum$.
By taking an algebraic view we can choose between both operations and domains. For instance,
an implication of algebraic view is that all the variations of SP can be applied to the domain of complex numbers $\RR = \Ce$.
\end{example}

\subsection{The new integral and marginals}
Once again we can use \refTheorem{th:integral_decompose}, this time to approximate the \emph{SP integral}
$\QQ(\emptyset)(\emptyset) = \bigspplus_{\msgss{\cdot}{\cdot}} \QQ(\msgss{\cdot}{\cdot})(\emptyset)$
using local integral of SP messages.

The \emph{SP marginal} over each BP message $\msg{i}{\II}$ or $\msg{\II}{i}$ 
is the same as the corresponding SP message -- \ie $\PSP(\msg{i}{\II}) = \msp{i}{\II}(\msg{i}{\II})$.
To see this in the factor-graph of \refFigure{fig:spfg},
note that each message variable is connected to two factors, and both of these factors are already
contained in calculating one SP messages.

Moreover, from the SP marginals over messages we can recover the SP marginals over BP marginals
which we denote by $\PSP(\ph)(\xx_i)$.
For this, we simply need to enumerate all combinations of BP messages that produce a particular marginal (weighting them by their local integral $\PHT_i(\msgss{\nb i}{i})(\emptyset)$)
\begin{align}\label{eq:sp_marg}
  \PSP(\ph)(\xx_i) \; \propto \; \bigspplus_{\msgss{\nb i}{i}} \ident(\ph(\xx_i) = \PP(\msgss{\nb i}{i})(\xx_i)) \bptimes \PHT_i(\msgss{\nb i}{i})(\emptyset) \bigbptimes_{\II \in \nb i} \msp{\II}{i}(\msg{\II}{i}) 
\end{align}




\subsection{Counting survey propagation}\label{sec:uniformsp}
Previously we required the $\otimes$ operator to have an inverse, so that we can decompose the BP integral $\qq(\emptyset)$ into local integrals.
Moreover, for a consistent decomposition of the BP integral, SP and BP semiring previously shared the $\otimes$ operation.\footnote{This is because, if the expansion operation $\sptimes$ was different from the expansion operation of BP, $\bptimes$, the expanded form $\QQ(\msgss{\cdot}{\cdot})$ in the SP factor-graph
would not evaluate the integral $\qq(\emptyset)$ in the BP factor-graph, even in factor-graphs without any loops.}
 
Here, we lift these requirements by discarding the BP integrals altogether. This means SP semiring could be completely distinct from BP semiring and $(\RR, \otimes)$ does not have to be an Abelian group. This setting is particularly interesting when the SP semiring is sum-product over real domain 
\begin{align}
  \msp{i}{\II}(\msg{i}{\II}) \; &\propto \; \sum_{\msgss{\nb i \back \II}{i}}  
\ident\big ( \msg{i}{\II} = \MSG{i}{\II}(\msgss{\nb i \back \II}{i}) \big) 
 \prod_{\JJ \in \nb i \back \II} \msp{\JJ}{i}(\msg{\JJ}{i}) \label{eq:spiI_counting}\\
  \msp{\II}{i}(\msg{\II}{i}) \; &\propto \; \sum_{\msgss{\nb \II \back i}{\II}} 
\ident\big ( \msg{\II}{i} = \MSG{\II}{i}(\msgss{\nb \II \back i}{\II}) \big) 
 \prod_{j \in \nb \II \back i} \msp{j}{\II}(\msg{j}{\II})  \label{eq:spIi_counting}
\end{align}

Here, the resulting SP integral $\QQ(\msgss{\cdot}{\cdot}) = \sum_{\msgss{\cdot}{\cdot}} \ident(\msgss{\cdot}{\cdot} \in \WW)$ simply ``counts'' the number of BP fixed points and
SP marginals over BP marginals (given by \refEq{eq:sp_marg}) which approximates the frequency of a particular
marginal.
The original survey propagation equations in \cite{braunstein_survey_2002}, which are very
successful in solving satisfiability,
correspond to counting SP applied to the or-and semiring.
\begin{example}
Interestingly, in all min-max problems with discrete domains $\XX$, min-max BP messages can only take the values that are in the range of factors -- \ie $\RR = \YY$.
This is because any ordered set is closed under min and max operations.
Here, each counting SP message $\msp{i}{\II}(\msg{i}{\II}): \YY^{|\XX_i|} \to \Re$
is a discrete distribution over all possible min-max BP messages.
This means counting survey propagation where the BP semring is min-max is  
computationally ``tractable''.
In contrast, (counting) SP, when applied to sum-product BP over real domains, is not tractable.
This is because, in this case, each SP message is a distribution over an uncountable set:
$\msp{i}{\II}(\msg{i}{\II}): \Re^{|\XX_i|} \to \Re$.

In practice, (counting) SP is only interesting if it remains tractable. The most
well-known case corresponds to counting SP when applied to the or-and semiring. In this case
the factors are constraints and the domain of SP messages is $\{\truemath, \falsemath\}^{|\XX_i|}$.
Our algebraic perspective extends this set of tractable instances. For example, it show that 
counting SP can be used to count the number of fixed points of BP when applied to xor-and or min-max semiring.  
\end{example}

\section*{Conclusion}
This paper builds on previous work to addresses three basic questions about inference in graphical models:
(\textbf{I}) \emph{``what is an inference problem in a graphical model?''}  We use the combination of commutative semigroups and a factor-graph to answer this question in a broad sense, generalizing a variety of previous models.
(\textbf{II}) \emph{``How difficult is inference?''} By confining inference to four operations of min, max, sum and product that easily lend themselves to models of computation, we build an inference hierarchy
that is complete for \pspace\ and organizes inference problems into complexity classes with increasing levels of difficulty.
Only a few of these problems had previously been studied and 
only a handful of them now have a variational interpretation.
Moreover, we prove that inference for ``any'' commutative semiring is \NP-hard under randomized reduction, which generalizes previous results for particular semirings.  
(\textbf{III}) \emph{``When does distributive law help?''} After reviewing the algebraic form of belief propagation, and the conditions that allow its normalized form, we show that application of distributive law in performing exact inference is limited to inference problems with one marginalization operation. (\textbf{IV}) Finally we extend the algeberaic treatment of message passing techniques to survey propagation. This perspective generalizes SP to new settings and we hope that it also makes this powerfull message passing procedure -- that has its origins in statistical physics -- more accessible to the machine learning community.


\bibliographystyle{ACM-Reference-Format-Journals}
\bibliography{refs.bib}


\begin{thebibliography}{00}


\ifx \showCODEN    \undefined \def \showCODEN     #1{\unskip}     \fi
\ifx \showDOI      \undefined \def \showDOI       #1{{\tt DOI:}\penalty0{#1}\ }
  \fi
\ifx \showISBNx    \undefined \def \showISBNx     #1{\unskip}     \fi
\ifx \showISBNxiii \undefined \def \showISBNxiii  #1{\unskip}     \fi
\ifx \showISSN     \undefined \def \showISSN      #1{\unskip}     \fi
\ifx \showLCCN     \undefined \def \showLCCN      #1{\unskip}     \fi
\ifx \shownote     \undefined \def \shownote      #1{#1}          \fi
\ifx \showarticletitle \undefined \def \showarticletitle #1{#1}   \fi
\ifx \showURL      \undefined \def \showURL       #1{#1}          \fi

\bibitem[\protect\citeauthoryear{Aji and McEliece}{Aji and McEliece}{2001}]%
        {Aji01}
{S Aji} {and} {R McEliece}. 2001.
\newblock \showarticletitle{{The Generalized distributive law and free energy
  minimization}}. In {\em Allerton Conf}.
\newblock


\bibitem[\protect\citeauthoryear{Aji, Horn, and McEliece}{Aji
  et~al\mbox{.}}{1998}]%
        {Aji1998}
{Srinivas~M Aji}, {Gavin~B Horn}, {and} {Robert~J McEliece}. 1998.
\newblock \showarticletitle{{On the Convergence of iterative decoding on graphs
  with a single cycle}}. In {\em Proc 1998 IEEE Int Symp Information Theory}.
\newblock


\bibitem[\protect\citeauthoryear{Aji and McEliece}{Aji and McEliece}{2000}]%
        {Aji2000}
{S~M Aji} {and} {R~J McEliece}. 2000.
\newblock \showarticletitle{{The generalized distributive law}}.
\newblock {\em Information Theory, IEEE Transactions on\/} {46}, 2 (2000),
  325--343.
\newblock


\bibitem[\protect\citeauthoryear{Arora and Barak}{Arora and Barak}{2009}]%
        {arora2009computational}
{Sanjeev Arora} {and} {Boaz Barak}. 2009.
\newblock {\em Computational complexity: a modern approach}.
\newblock Cambridge University Press.
\newblock


\bibitem[\protect\citeauthoryear{Bacchus}{Bacchus}{1991}]%
        {bacchus1991representing}
{Fahiem Bacchus}. 1991.
\newblock {\em Representing and reasoning with probabilistic knowledge: a
  logical approach to probabilities}.
\newblock MIT press.
\newblock


\bibitem[\protect\citeauthoryear{Bayati, Shah, and Sharma}{Bayati
  et~al\mbox{.}}{2005}]%
        {bayati2005maximum}
{Mohsen Bayati}, {Devavrat Shah}, {and} {Mayank Sharma}. 2005.
\newblock \showarticletitle{Maximum weight matching via max-product belief
  propagation}. In {\em ISIT}. IEEE, 1763--1767.
\newblock


\bibitem[\protect\citeauthoryear{Bistarelli, Montanari, Rossi, Schiex,
  Verfaillie, and Fargier}{Bistarelli et~al\mbox{.}}{1999}]%
        {bistarelli1999semiring}
{Stefano Bistarelli}, {Ugo Montanari}, {Francesca Rossi}, {Thomas Schiex},
  {G{\'e}rard Verfaillie}, {and} {H{\'e}lene Fargier}. 1999.
\newblock \showarticletitle{Semiring-based CSPs and valued CSPs: Frameworks,
  properties, and comparison}.
\newblock {\em Constraints\/} {4}, 3 (1999), 199--240.
\newblock


\bibitem[\protect\citeauthoryear{Bordeaux and Monfroy}{Bordeaux and
  Monfroy}{2002}]%
        {bordeaux2002beyond}
{Lucas Bordeaux} {and} {Eric Monfroy}. 2002.
\newblock \showarticletitle{Beyond NP: Arc-consistency for quantified
  constraints}. In {\em Principles and Practice of Constraint Programming-CP
  2002}. Springer, 371--386.
\newblock


\bibitem[\protect\citeauthoryear{Braunstein, Mezard, Weigt, and
  Zecchina}{Braunstein et~al\mbox{.}}{2002b}]%
        {braunstein_constraint_2002}
{A Braunstein}, {M Mezard}, {M Weigt}, {and} {R Zecchina}. 2002b.
\newblock \showarticletitle{Constraint Satisfaction by Survey Propagation}.
\newblock {\em Physics\/} 2 (2002), 8.
\newblock


\bibitem[\protect\citeauthoryear{Braunstein, Mezard, and Zecchina}{Braunstein
  et~al\mbox{.}}{2002a}]%
        {braunstein_survey_2002}
{A Braunstein}, {M Mezard}, {and} {R Zecchina}. 2002a.
\newblock \showarticletitle{Survey propagation: an algorithm for
  satisfiability}.
\newblock {\em Random Structures and Algorithms\/} {27}, 2 (2002), 19.
\newblock
\showLCCN{0000}


\bibitem[\protect\citeauthoryear{Braunstein and Zecchina}{Braunstein and
  Zecchina}{2003}]%
        {braunstein_survey_2003}
{A. Braunstein} {and} {R. Zecchina}. 2003.
\newblock \showarticletitle{Survey Propagation as local equilibrium equations}.
\newblock {\em {arXiv:cond-mat/0312483}\/} (Dec. 2003).
\newblock
\showLCCN{0000}
\showDOI{%
\url{http://dx.doi.org/10.1088/1742-5468/2004/06/P06007}}
\newblock
\shownote{J. Stat. Mech., P06007 (2004).}


\bibitem[\protect\citeauthoryear{Clifford}{Clifford}{1990}]%
        {clifford1990markov}
{Peter Clifford}. 1990.
\newblock \showarticletitle{Markov random fields in statistics}.
\newblock {\em Disorder in physical systems\/} (1990), 19--32.
\newblock


\bibitem[\protect\citeauthoryear{Cooper}{Cooper}{1990}]%
        {Cooper90:Computational}
{G Cooper}. 1990.
\newblock \showarticletitle{The computational complexity of probabilistic
  inference using {Bayesian} belief networks}.
\newblock {\em Artificial Intelligence\/}  {42} (1990).
\newblock


\bibitem[\protect\citeauthoryear{Davey and Priestley}{Davey and
  Priestley}{2002}]%
        {davey2002introduction}
{Brian~A Davey} {and} {Hilary~A Priestley}. 2002.
\newblock {\em Introduction to lattices and order}.
\newblock Cambridge university press.
\newblock


\bibitem[\protect\citeauthoryear{Dechter and Larkin}{Dechter and
  Larkin}{2001}]%
        {dechter2001hybrid}
{Rina Dechter} {and} {David Larkin}. 2001.
\newblock \showarticletitle{Hybrid processing of beliefs and constraints}. In
  {\em Proceedings of the Seventeenth conference on Uncertainty in artificial
  intelligence}. Morgan Kaufmann Publishers Inc., 112--119.
\newblock


\bibitem[\protect\citeauthoryear{Eckmann and Hilton}{Eckmann and
  Hilton}{1962}]%
        {eckmann1962group}
{Beno Eckmann} {and} {Peter~J Hilton}. 1962.
\newblock \showarticletitle{Group-like structures in general categories I
  multiplications and comultiplications}.
\newblock {\it Math. Ann.} {145}, 3 (1962), 227--255.
\newblock


\bibitem[\protect\citeauthoryear{Forney~Jr}{Forney~Jr}{2001}]%
        {forney2001codes}
{G~David Forney~Jr}. 2001.
\newblock \showarticletitle{Codes on graphs: normal realizations}.
\newblock {\em Information Theory, IEEE Transactions on\/} {47}, 2 (2001),
  520--548.
\newblock


\bibitem[\protect\citeauthoryear{Fu and Anderson}{Fu and Anderson}{1986}]%
        {fu_application_1986}
{Y Fu} {and} {P~W Anderson}. 1986.
\newblock \showarticletitle{Application of statistical mechanics to
  {NP-complete} problems in combinatorial optimisation}.
\newblock {\em Journal of Physics A: Mathematical and General\/} {19}, 9 (June
  1986), 1605--1620.
\newblock
\showISSN{0305-4470, 1361-6447}
\showLCCN{0000}
\showDOI{%
\url{http://dx.doi.org/10.1088/0305-4470/19/9/033}}


\bibitem[\protect\citeauthoryear{Gallager}{Gallager}{1962}]%
        {gallager1962low}
{Robert~G Gallager}. 1962.
\newblock \showarticletitle{Low-density parity-check codes}.
\newblock {\em Information Theory, IRE Transactions on\/} {8}, 1 (1962),
  21--28.
\newblock


\bibitem[\protect\citeauthoryear{Heskes}{Heskes}{2003}]%
        {Heskes03}
{T Heskes}. 2003.
\newblock \showarticletitle{Stable Fixed Points of Loopy Belief Propagation Are
  Local Minima of the {Bethe} Free Energy}. In {\em NIPS}.
\newblock


\bibitem[\protect\citeauthoryear{Howard and Matheson}{Howard and
  Matheson}{2005}]%
        {howard2005influence}
{Ronald~A Howard} {and} {James~E Matheson}. 2005.
\newblock \showarticletitle{Influence diagrams}.
\newblock {\em Decision Analysis\/} {2}, 3 (2005), 127--143.
\newblock


\bibitem[\protect\citeauthoryear{Huang and Jebara}{Huang and Jebara}{2009}]%
        {huang2009approximating}
{Bert Huang} {and} {Tony Jebara}. 2009.
\newblock \showarticletitle{Approximating the permanent with belief
  propagation}.
\newblock {\em arXiv preprint arXiv:0908.1769\/} (2009).
\newblock


\bibitem[\protect\citeauthoryear{Ibrahimi, Javanmard, Kanoria, and
  Montanari}{Ibrahimi et~al\mbox{.}}{2011}]%
        {ibrahimi2011robust}
{Morteza Ibrahimi}, {Adel Javanmard}, {Yashodhan Kanoria}, {and} {Andrea
  Montanari}. 2011.
\newblock \showarticletitle{Robust max-product belief propagation}. In {\em
  Signals, Systems and Computers (ASILOMAR), 2011 Conference Record of the
  Forty Fifth Asilomar Conference on}. IEEE, 43--49.
\newblock


\bibitem[\protect\citeauthoryear{Jaynes}{Jaynes}{2003}]%
        {jaynes2003probability}
{Edwin~T Jaynes}. 2003.
\newblock {\em Probability theory: the logic of science}.
\newblock Cambridge university press.
\newblock


\bibitem[\protect\citeauthoryear{Kearns}{Kearns}{2007}]%
        {kearns2007graphical}
{Michael Kearns}. 2007.
\newblock \showarticletitle{Graphical games}.
\newblock {\em Algorithmic game theory\/}  {3} (2007), 159--180.
\newblock


\bibitem[\protect\citeauthoryear{Kroc, Sabharwal, and Selman}{Kroc
  et~al\mbox{.}}{2002}]%
        {Kroc2002}
{Lukas Kroc}, {A Sabharwal}, {and} {Bart Selman}. 2002.
\newblock \showarticletitle{{Survey Propagation Revisited}}.
\newblock {\em 23rd UAI\/} (2002), 217--226.
\newblock


\bibitem[\protect\citeauthoryear{Kschischang and Frey}{Kschischang and
  Frey}{2001}]%
        {kschischang_factor_2001}
{F~R Kschischang} {and} {B~J Frey}. 2001.
\newblock \showarticletitle{Factor graphs and the sum-product algorithm}.
\newblock {\em Information Theory, {IEEE}\/} (2001).
\newblock
\showLCCN{3306}


\bibitem[\protect\citeauthoryear{Lauritzen and Jensen}{Lauritzen and
  Jensen}{1997}]%
        {lauritzen1997local}
{Steffen~L Lauritzen} {and} {Finn~Verner Jensen}. 1997.
\newblock \showarticletitle{Local computation with valuations from a
  commutative semigroup}.
\newblock {\em Annals of Mathematics and Artificial Intelligence\/} {21}, 1
  (1997), 51--69.
\newblock


\bibitem[\protect\citeauthoryear{Littman, Majercik, and Pitassi}{Littman
  et~al\mbox{.}}{2001}]%
        {littman2001stochastic}
{Michael~L Littman}, {Stephen~M Majercik}, {and} {Toniann Pitassi}. 2001.
\newblock \showarticletitle{Stochastic boolean satisfiability}.
\newblock {\em Journal of Automated Reasoning\/} {27}, 3 (2001), 251--296.
\newblock


\bibitem[\protect\citeauthoryear{Maneva, Mossel, and Wainwright}{Maneva
  et~al\mbox{.}}{2004}]%
        {maneva_new_2004}
{E~N. Maneva}, {E Mossel}, {and} {M~J Wainwright}. 2004.
\newblock \showarticletitle{A New Look at Survey Propagation and its
  Generalizations}.
\newblock {\em {arXiv:cs/0409012}\/} (Sept. 2004).
\newblock
\showLCCN{0000}


\bibitem[\protect\citeauthoryear{M{\'e}zard and Montanari}{M{\'e}zard and
  Montanari}{2009}]%
        {Mezard09}
{M. M{\'e}zard} {and} {A. Montanari}. 2009.
\newblock {\em Information, physics, and computation}.
\newblock Oxford.
\newblock


\bibitem[\protect\citeauthoryear{M{\'e}zard, Parisi, and Virasoro}{M{\'e}zard
  et~al\mbox{.}}{1987}]%
        {Mezard1987}
{Marc M{\'e}zard}, {Georgio Parisi}, {and} {M.~A. Virasoro}. 1987.
\newblock {\em {Spin Glass Theory and Beyond}}.
\newblock Singapore: World Scientific.
\newblock


\bibitem[\protect\citeauthoryear{Mezard, Parisi, and Zecchina}{Mezard
  et~al\mbox{.}}{2002}]%
        {mezard_analytic_2002}
{M. Mezard}, {G. Parisi}, {and} {R. Zecchina}. 2002.
\newblock \showarticletitle{Analytic and Algorithmic Solution of Random
  Satisfiability Problems}.
\newblock {\em Science\/} {297}, 5582 (Aug. 2002), 812--815.
\newblock
\showISSN{0036-8075, 1095-9203}
\showLCCN{0633}
\showDOI{%
\url{http://dx.doi.org/10.1126/science.1073287}}


\bibitem[\protect\citeauthoryear{Mitchell, Selman, and Levesque}{Mitchell
  et~al\mbox{.}}{1992}]%
        {mitchell1992hard}
{D Mitchell}, {B Selman}, {and} {H Levesque}. 1992.
\newblock \showarticletitle{Hard and easy distributions of SAT problems}. In
  {\em AAAI}, Vol.~92. 459--465.
\newblock


\bibitem[\protect\citeauthoryear{Monasson, Zecchina, Kirkpatrick, Selman, and
  Troyansky}{Monasson et~al\mbox{.}}{1999}]%
        {monasson1999determining}
{R{\'e}mi Monasson}, {Riccardo Zecchina}, {Scott Kirkpatrick}, {Bart Selman},
  {and} {Lidror Troyansky}. 1999.
\newblock \showarticletitle{Determining computational complexity from
  characteristic ‘phase transitions’}.
\newblock {\em Nature\/} {400}, 6740 (1999), 133--137.
\newblock


\bibitem[\protect\citeauthoryear{Murphy, Weiss, and Jordan}{Murphy
  et~al\mbox{.}}{1999}]%
        {Murphy1999}
{Kevin Murphy}, {Yair Weiss}, {and} {Michael Jordan}. 1999.
\newblock \showarticletitle{{Loopy-belief Propagation for Approximate
  Inference: An Empirical Study}}. In {\em 15}. 467--475.
\newblock


\bibitem[\protect\citeauthoryear{Ortiz and Kearns}{Ortiz and Kearns}{2002}]%
        {ortiz2002nash}
{Luis~E Ortiz} {and} {Michael Kearns}. 2002.
\newblock \showarticletitle{Nash propagation for loopy graphical games}. In
  {\em Advances in Neural Information Processing Systems}. 793--800.
\newblock


\bibitem[\protect\citeauthoryear{Park and Darwiche}{Park and Darwiche}{2004}]%
        {park2004complexity}
{James~D Park} {and} {Adnan Darwiche}. 2004.
\newblock \showarticletitle{Complexity Results and Approximation Strategies for
  MAP Explanations.}
\newblock {\em J. Artif. Intell. Res.(JAIR)\/}  {21} (2004), 101--133.
\newblock


\bibitem[\protect\citeauthoryear{Pearl}{Pearl}{1985}]%
        {pearl1985bayesian}
{Judea Pearl}. 1985.
\newblock \showarticletitle{Bayesian networks: a model of self-activated memory
  for evidential reasoning}.
\newblock  (1985).
\newblock


\bibitem[\protect\citeauthoryear{Pearl}{Pearl}{1988}]%
        {pearl_probabilistic_1988}
{J Pearl}. 1988.
\newblock {\em Probabilistic Reasoning in Intelligent Systems: Networks of
  Plausible Inference}. Representation and Reasoning, Vol.~88.
\newblock Morgan Kaufmann.
\newblock
\showISBNx{1558604790}
\showLCCN{17013}


\bibitem[\protect\citeauthoryear{Pinter}{Pinter}{2012}]%
        {pinter2012book}
{Charles~C Pinter}. 2012.
\newblock {\em A book of abstract algebra}.
\newblock Courier Dover Publications.
\newblock


\bibitem[\protect\citeauthoryear{Potetz and Lee}{Potetz and Lee}{2008}]%
        {Potetz2008}
{Brian Potetz} {and} {Tai~Sing Lee}. 2008.
\newblock \showarticletitle{{Efficient Belief Propagation for Higher Order
  Cliques Using Linear Constraint Nodes Cliques Using Linear Constraint
  Nodes}}.
\newblock {\em Computer Vision and Image Understanding\/} {112}, 1 (2008),
  39--54.
\newblock
\showDOI{%
\url{http://dx.doi.org/10.1016/j.cviu.2008.05.007}}


\bibitem[\protect\citeauthoryear{Pralet, Verfaillie, and Schiex}{Pralet
  et~al\mbox{.}}{2007}]%
        {pralet2007algebraic}
{C{\'e}dric Pralet}, {G{\'e}rard Verfaillie}, {and} {Thomas Schiex}. 2007.
\newblock \showarticletitle{An Algebraic Graphical Model for Decision with
  Uncertainties, Feasibilities, and Utilities.}
\newblock {\em J. Artif. Intell. Res.(JAIR)\/}  {29} (2007), 421--489.
\newblock


\bibitem[\protect\citeauthoryear{Ravanbakhsh, Srinivasa, Frey, and
  Greiner}{Ravanbakhsh et~al\mbox{.}}{2014}]%
        {ravanbakhsh_minmax}
{S Ravanbakhsh}, {C Srinivasa}, {B Frey}, {and} {R Greiner}. 2014.
\newblock \showarticletitle{Min-Max Problems on Factor-Graphs}.
\newblock {\em ICML\/} (2014).
\newblock


\bibitem[\protect\citeauthoryear{Rogers}{Rogers}{1987}]%
        {rogers1987theory}
{Hartley Rogers}. 1987.
\newblock \showarticletitle{Theory of recursive functions and effective
  computability}.
\newblock  (1987).
\newblock


\bibitem[\protect\citeauthoryear{Roth}{Roth}{1993}]%
        {roth1993hardness}
{Dan Roth}. 1993.
\newblock \showarticletitle{On the hardness of approximate reasoning}. In {\em
  Proceedings of the 13th international joint conference on Artifical
  intelligence-Volume 1}. Morgan Kaufmann Publishers Inc., 613--618.
\newblock


\bibitem[\protect\citeauthoryear{Schiex, Fargier, and Verfaillie}{Schiex
  et~al\mbox{.}}{1995}]%
        {schiex1995valued}
{Thomas Schiex}, {Helene Fargier}, {and} {Gerard Verfaillie}. 1995.
\newblock \showarticletitle{Valued constraint satisfaction problems: hard and
  easy problems}. In {\em Proceedings of the 14th international joint
  conference on Artificial intelligence-Volume 1}. Morgan Kaufmann Publishers
  Inc., 631--637.
\newblock


\bibitem[\protect\citeauthoryear{Shenoy}{Shenoy}{1992}]%
        {shenoy1992valuation}
{Prakash~P Shenoy}. 1992.
\newblock \showarticletitle{Valuation-based systems: A framework for managing
  uncertainty in expert systems}. In {\em Fuzzy logic for the management of
  uncertainty}. John Wiley \& Sons, Inc., 83--104.
\newblock


\bibitem[\protect\citeauthoryear{Shenoy and Shafer}{Shenoy and Shafer}{1990}]%
        {shenoy1990axioms}
{Prakash~P Shenoy} {and} {Glenn Shafer}. 1990.
\newblock \showarticletitle{Axioms for probability and belief-function
  propagation}. In {\em Uncertainty in Artificial Intelligence}.
\newblock


\bibitem[\protect\citeauthoryear{Shimony}{Shimony}{1994}]%
        {shimony1994finding}
{Solomon~Eyal Shimony}. 1994.
\newblock \showarticletitle{Finding MAPs for belief networks is NP-hard}.
\newblock {\em Artificial Intelligence\/} {68}, 2 (1994), 399--410.
\newblock


\bibitem[\protect\citeauthoryear{Stockmeyer}{Stockmeyer}{1976}]%
        {stockmeyer1976polynomial}
{Larry~J Stockmeyer}. 1976.
\newblock \showarticletitle{The polynomial-time hierarchy}.
\newblock {\em Theoretical Computer Science\/} {3}, 1 (1976), 1--22.
\newblock


\bibitem[\protect\citeauthoryear{Tarlow and Zemel}{Tarlow and Zemel}{2010}]%
        {Tarlow2010}
{Daniel Tarlow} {and} {Richard~S Zemel}. 2010.
\newblock \showarticletitle{{HOP-MAP : Efficient Message Passing with High
  Order Potentials}}.
\newblock {\em Journal of Machine Learning Research Proceedings Track\/}  {9}
  (2010), 812--819.
\newblock
\showURL{%
\url{http://www.cs.toronto.edu/~dtarlow/TarlowGivoniZemel\_AISTATS2010.pdf}}


\bibitem[\protect\citeauthoryear{Toda}{Toda}{1991}]%
        {toda1991pp}
{Seinosuke Toda}. 1991.
\newblock \showarticletitle{PP is as hard as the polynomial-time hierarchy}.
\newblock {\it SIAM J. Comput.} {20}, 5 (1991), 865--877.
\newblock


\bibitem[\protect\citeauthoryear{Valiant}{Valiant}{1979}]%
        {valiant1979complexity}
{L~G. Valiant}. 1979.
\newblock \showarticletitle{The complexity of computing the permanent}.
\newblock {\em Theoretical computer science\/} {8}, 2 (1979), 189--201.
\newblock


\bibitem[\protect\citeauthoryear{Valiant and Vazirani}{Valiant and
  Vazirani}{1986}]%
        {valiant1986np}
{Leslie~G Valiant} {and} {Vijay~V Vazirani}. 1986.
\newblock \showarticletitle{NP is as easy as detecting unique solutions}.
\newblock {\em Theoretical Computer Science\/}  {47} (1986), 85--93.
\newblock


\bibitem[\protect\citeauthoryear{Von~Neumann and Morgenstern}{Von~Neumann and
  Morgenstern}{2007}]%
        {von2007theory}
{John Von~Neumann} {and} {Oskar Morgenstern}. 2007.
\newblock {\em Theory of Games and Economic Behavior (60th Anniversary
  Commemorative Edition)}.
\newblock Princeton university press.
\newblock


\bibitem[\protect\citeauthoryear{Wagner}{Wagner}{1986}]%
        {wagner1986complexity}
{Klaus~W Wagner}. 1986.
\newblock \showarticletitle{The complexity of combinatorial problems with
  succinct input representation}.
\newblock {\em Acta Informatica\/} {23}, 3 (1986), 325--356.
\newblock


\bibitem[\protect\citeauthoryear{Wainwright and Jordan}{Wainwright and
  Jordan}{2007}]%
        {Wainwright2007}
{Martin~J Wainwright} {and} {Michael~I Jordan}. 2007.
\newblock \showarticletitle{{Graphical Models, Exponential Families, and
  Variational Inference}}.
\newblock {\em Foundations and Trends® in Machine Learning\/} {1}, 1–2
  (2007), 1--305.
\newblock
\showISBNx{1601981848}
\showISSN{19358237}
\showDOI{%
\url{http://dx.doi.org/10.1561/2200000001}}


\bibitem[\protect\citeauthoryear{Weiss}{Weiss}{2001}]%
        {Weiss2001a}
{Yair Weiss}. 2001.
\newblock \showarticletitle{{Correctness of belief propagation in Gaussian
  graphical models of arbitrary topology}}.
\newblock {\em Neural computation\/} June (2001).
\newblock


\bibitem[\protect\citeauthoryear{Weller and Jebara}{Weller and Jebara}{2013}]%
        {weller2013map}
{Adrian Weller} {and} {Tony~S Jebara}. 2013.
\newblock \showarticletitle{On MAP inference by MWSS on Perfect Graphs}.
\newblock {\em arXiv preprint arXiv:1309.6872\/} (2013).
\newblock


\bibitem[\protect\citeauthoryear{Yedidia and Freeman}{Yedidia and
  Freeman}{2001}]%
        {Yedidia2001a}
{JS Yedidia} {and} {WT Freeman}. 2001.
\newblock \showarticletitle{{Generalized belief propagation}}.
\newblock {\em Advances in neural information\/} (2001).
\newblock


\end{thebibliography}






\end{document}